\documentclass{article}
\usepackage[latin9]{inputenc}
\setcounter{secnumdepth}{3}
\setcounter{tocdepth}{3}
\usepackage{float}
\usepackage{units}
\usepackage{amsmath}
\usepackage{amsthm}
\usepackage{amssymb}
\usepackage{graphicx}
\usepackage{esint}
\usepackage[authoryear]{natbib}
\usepackage{xargs}[2008/03/08]
\usepackage[unicode=true,
 bookmarks=false,
 breaklinks=false,pdfborder={0 0 1},backref=section,colorlinks=false]
 {hyperref}

\makeatletter

\providecommand{\tabularnewline}{\\}
\newcommand{\lyxdot}{.}

\floatstyle{ruled}
\newfloat{algorithm}{tbp}{loa}
\providecommand{\algorithmname}{Algorithm}
\floatname{algorithm}{\protect\algorithmname}

\theoremstyle{plain}
\newtheorem{thm}{\protect\theoremname}
  \theoremstyle{plain}
  \newtheorem{prop}[thm]{\protect\propositionname}
 \usepackage{algolyx}

\usepackage{iclr2018_conference}
\usepackage{times}\usepackage{url}

\title{MGAN: Training Generative Adversarial Nets with Multiple Generators}


\author{Quan Hoang \\
University of Massachusetts-Amherst\\
Amherst, MA 01003, USA \\
\texttt{qhoang@umass.edu} \\
\AND
Tu Dinh Nguyen, Trung Le, Dinh Phung \\
PRaDA Centre,  Deakin University\\
Geelong, Australia \\
\texttt{\{tu.nguyen,trung.l,dinh.phung\}@deakin.edu.au} \\
}

%

\newcommand{\new}{\marginpar{NEW}}

\iclrfinalcopy 

\usepackage{graphicx}

\usepackage{colortbl} 
\definecolor{header_color}{rgb}{0.74,0.88,0.91}
\definecolor{even_color}{rgb}{0.9,0.9,0.9}
\definecolor{subheader_color}{rgb}{0.85,0.93,0.95}
\definecolor{childheader_color}{rgb}{1.0,0.93,0.87}

\definecolor{ccolor_best}{rgb}{1.0,0.9,0.9}
\definecolor{ccolor_wrong}{rgb}{1.0,0.85,0.85}

\usepackage{ifthen}
\renewenvironment{figure}[1][]{%
 \ifthenelse{\equal{#1}{}}{%
   \@float{figure}
 }{%
   \@float{figure}[#1]%
 }%
 \centering
}{%
 \end@float
}

\renewenvironment{table}[1][]{%
 \ifthenelse{\equal{#1}{}}{%
   \@float{table}
 }{%
   \@float{table}[#1]%
 }%
 \centering
}{%
 \end@float
}

\@ifundefined{showcaptionsetup}{}{%
 \PassOptionsToPackage{caption=false}{subfig}}
\usepackage{subfig}
\makeatother

  \providecommand{\propositionname}{Proposition}
\providecommand{\theoremname}{Theorem}

\begin{document}
\maketitle
\begin{abstract}
We propose in this paper a new approach to train the Generative Adversarial
Nets (GANs) with a mixture of generators to overcome the mode collapsing
problem. The main intuition is to employ multiple generators, instead
of using a single one as in the original GAN. The idea is simple,
yet proven to be extremely effective at covering diverse data modes,
easily overcoming the mode collapsing problem and delivering state-of-the-art
results. A minimax formulation was able to establish among a classifier,
a discriminator, and a set of generators in a similar spirit with
GAN. Generators create samples that are intended to come from the
same distribution as the training data, whilst the discriminator determines
whether samples are true data or generated by generators, and the
classifier specifies which generator a sample comes from. The distinguishing
feature is that internal samples are created from multiple generators,
and then one of them will be randomly selected as final output similar
to the mechanism of a probabilistic mixture model. We term our method
\emph{Mixture Generative Adversarial Nets} (MGAN). We develop theoretical
analysis to prove that, at the equilibrium, the Jensen-Shannon divergence
(JSD) between the mixture of generators' distributions and the empirical
data distribution is minimal, whilst the JSD among generators' distributions
is maximal, hence effectively avoiding the mode collapsing problem.
By utilizing parameter sharing, our proposed model adds minimal computational
cost to the standard GAN, and thus can also efficiently scale to large-scale
datasets. We conduct extensive experiments on synthetic 2D data and
natural image databases (CIFAR-10, STL-10 and ImageNet) to demonstrate
the superior performance of our MGAN in achieving state-of-the-art
Inception scores over latest baselines, generating diverse and appealing
recognizable objects at different resolutions, and specializing in
capturing different types of objects by the generators.
\end{abstract}
\newcommand{\sidenote}[1]{\marginpar{\small \emph{\color{Medium}#1}}}

\global\long\def\se{\hat{\text{se}}}

\global\long\def\interior{\text{int}}

\global\long\def\boundary{\text{bd}}

\global\long\def\ML{\textsf{ML}}

\global\long\def\GML{\mathsf{GML}}

\global\long\def\HMM{\mathsf{HMM}}

\global\long\def\support{\text{supp}}

\global\long\def\new{\text{*}}

\global\long\def\stir{\text{Stirl}}

\global\long\def\mA{\mathcal{A}}

\global\long\def\mB{\mathcal{B}}

\global\long\def\expect{\mathbb{E}}

\global\long\def\mF{\mathcal{F}}

\global\long\def\mK{\mathcal{K}}

\global\long\def\mH{\mathcal{H}}

\global\long\def\mX{\mathcal{X}}

\global\long\def\mZ{\mathcal{Z}}

\global\long\def\mS{\mathcal{S}}

\global\long\def\Ical{\mathcal{I}}

\global\long\def\mT{\mathcal{T}}

\global\long\def\Pcal{\mathcal{P}}

\global\long\def\dist{d}

\global\long\def\HX{\entro\left(X\right)}
 \global\long\def\entropyX{\HX}

\global\long\def\HY{\entro\left(Y\right)}
 \global\long\def\entropyY{\HY}

\global\long\def\HXY{\entro\left(X,Y\right)}
 \global\long\def\entropyXY{\HXY}

\global\long\def\mutualXY{\mutual\left(X;Y\right)}
 \global\long\def\mutinfoXY{\mutualXY}

\global\long\def\given{\mid}

\global\long\def\gv{\given}

\global\long\def\goto{\rightarrow}

\global\long\def\asgoto{\stackrel{a.s.}{\longrightarrow}}

\global\long\def\pgoto{\stackrel{p}{\longrightarrow}}

\global\long\def\dgoto{\stackrel{d}{\longrightarrow}}

\global\long\def\lik{\mathcal{L}}

\global\long\def\logll{\mathit{l}}

\global\long\def\bigcdot{\raisebox{-0.5ex}{\scalebox{1.5}{\ensuremath{\cdot}}}}

\global\long\def\sig{\textrm{sig}}

\global\long\def\likelihood{\mathcal{L}}

\global\long\def\vectorize#1{\mathbf{#1}}

\global\long\def\vt#1{\mathbf{#1}}

\global\long\def\gvt#1{\boldsymbol{#1}}

\global\long\def\idp{\ \bot\negthickspace\negthickspace\bot\ }
 \global\long\def\cdp{\idp}

\global\long\def\das{}

\global\long\def\id{\mathbb{I}}

\global\long\def\idarg#1#2{\id\left\{  #1,#2\right\}  }

\global\long\def\iid{\stackrel{\text{iid}}{\sim}}

\global\long\def\bzero{\vt 0}

\global\long\def\bone{\mathbf{1}}

\global\long\def\a{\mathrm{a}}

\global\long\def\ba{\mathbf{a}}

\global\long\def\b{\mathrm{b}}

\global\long\def\bb{\mathbf{b}}

\global\long\def\B{\mathrm{B}}

\global\long\def\boldm{\boldsymbol{m}}

\global\long\def\c{\mathrm{c}}

\global\long\def\C{\mathrm{C}}

\global\long\def\d{\mathrm{d}}

\global\long\def\D{\mathrm{D}}

\global\long\def\N{\mathrm{N}}

\global\long\def\h{\mathrm{h}}

\global\long\def\H{\mathrm{H}}

\global\long\def\bH{\mathbf{H}}

\global\long\def\K{\mathrm{K}}

\global\long\def\M{\mathrm{M}}

\global\long\def\bff{\vt f}

\global\long\def\bx{\mathbf{\mathbf{x}}}

\global\long\def\bl{\boldsymbol{l}}

\global\long\def\s{\mathrm{s}}

\global\long\def\T{\mathrm{T}}

\global\long\def\bu{\mathbf{u}}

\global\long\def\v{\mathrm{v}}

\global\long\def\bv{\mathbf{v}}

\global\long\def\bo{\boldsymbol{o}}

\global\long\def\bh{\mathbf{h}}

\global\long\def\bs{\boldsymbol{s}}

\global\long\def\x{\mathrm{x}}

\global\long\def\bx{\mathbf{x}}

\global\long\def\bz{\mathbf{z}}

\global\long\def\hbz{\hat{\bz}}

\global\long\def\z{\mathrm{z}}

\global\long\def\y{\mathrm{y}}

\global\long\def\bxnew{\boldsymbol{y}}

\global\long\def\bX{\boldsymbol{X}}

\global\long\def\tbx{\tilde{\bx}}

\global\long\def\by{\boldsymbol{y}}

\global\long\def\bY{\boldsymbol{Y}}

\global\long\def\bZ{\boldsymbol{Z}}

\global\long\def\bU{\boldsymbol{U}}

\global\long\def\bn{\boldsymbol{n}}

\global\long\def\bV{\boldsymbol{V}}

\global\long\def\bI{\boldsymbol{I}}

\global\long\def\J{\mathrm{J}}

\global\long\def\bJ{\mathbf{J}}

\global\long\def\w{\mathrm{w}}

\global\long\def\bw{\vt w}

\global\long\def\bW{\mathbf{W}}

\global\long\def\balpha{\gvt{\alpha}}

\global\long\def\bdelta{\boldsymbol{\delta}}

\global\long\def\bsigma{\gvt{\sigma}}

\global\long\def\bbeta{\gvt{\beta}}

\global\long\def\bmu{\gvt{\mu}}

\global\long\def\btheta{\boldsymbol{\theta}}

\global\long\def\blambda{\boldsymbol{\lambda}}

\global\long\def\bgamma{\boldsymbol{\gamma}}

\global\long\def\bpsi{\boldsymbol{\psi}}

\global\long\def\bphi{\boldsymbol{\phi}}

\global\long\def\bpi{\boldsymbol{\pi}}

\global\long\def\bomega{\boldsymbol{\omega}}

\global\long\def\bepsilon{\boldsymbol{\epsilon}}

\global\long\def\btau{\boldsymbol{\tau}}

\global\long\def\bxi{\boldsymbol{\xi}}

\global\long\def\realset{\mathbb{R}}

\global\long\def\realn{\realset^{n}}

\global\long\def\integerset{\mathbb{Z}}

\global\long\def\natset{\integerset}

\global\long\def\integer{\integerset}

\global\long\def\natn{\natset^{n}}

\global\long\def\rational{\mathbb{Q}}

\global\long\def\rationaln{\rational^{n}}

\global\long\def\complexset{\mathbb{C}}

\global\long\def\comp{\complexset}

\global\long\def\compl#1{#1^{\text{c}}}

\global\long\def\and{\cap}

\global\long\def\compn{\comp^{n}}

\global\long\def\comb#1#2{\left({#1\atop #2}\right) }

\global\long\def\nchoosek#1#2{\left({#1\atop #2}\right)}

\global\long\def\param{\vt w}

\global\long\def\Param{\Theta}

\global\long\def\meanparam{\gvt{\mu}}

\global\long\def\Meanparam{\mathcal{M}}

\global\long\def\meanmap{\mathbf{m}}

\global\long\def\logpart{A}

\global\long\def\simplex{\Delta}

\global\long\def\simplexn{\simplex^{n}}

\global\long\def\dirproc{\text{DP}}

\global\long\def\ggproc{\text{GG}}

\global\long\def\DP{\text{DP}}

\global\long\def\ndp{\text{nDP}}

\global\long\def\hdp{\text{HDP}}

\global\long\def\gempdf{\text{GEM}}

\global\long\def\rfs{\text{RFS}}

\global\long\def\bernrfs{\text{BernoulliRFS}}

\global\long\def\poissrfs{\text{PoissonRFS}}

\global\long\def\grad{\gradient}
 \global\long\def\gradient{\nabla}

\global\long\def\partdev#1#2{\partialdev{#1}{#2}}
 \global\long\def\partialdev#1#2{\frac{\partial#1}{\partial#2}}

\global\long\def\partddev#1#2{\partialdevdev{#1}{#2}}
 \global\long\def\partialdevdev#1#2{\frac{\partial^{2}#1}{\partial#2\partial#2^{\top}}}

\global\long\def\closure{\text{cl}}

\global\long\def\cpr#1#2{\Pr\left(#1\ |\ #2\right)}

\global\long\def\var{\text{Var}}

\global\long\def\Var#1{\text{Var}\left[#1\right]}

\global\long\def\cov{\text{Cov}}

\global\long\def\Cov#1{\cov\left[ #1 \right]}

\global\long\def\COV#1#2{\underset{#2}{\cov}\left[ #1 \right]}

\global\long\def\corr{\text{Corr}}

\global\long\def\sst{\text{T}}

\global\long\def\SST{\sst}

\global\long\def\ess{\mathbb{E}}

\global\long\def\Ess#1{\ess\left[#1\right]}

\newcommandx\ESS[2][usedefault, addprefix=\global, 1=]{\underset{#2}{\ess}\left[#1\right]}

\global\long\def\fisher{\mathcal{F}}

\global\long\def\bfield{\mathcal{B}}
 \global\long\def\borel{\mathcal{B}}

\global\long\def\bernpdf{\text{Bernoulli}}

\global\long\def\betapdf{\text{Beta}}

\global\long\def\dirpdf{\text{Dir}}

\global\long\def\gammapdf{\text{Gamma}}

\global\long\def\gaussden#1#2{\text{Normal}\left(#1, #2 \right) }

\global\long\def\gauss{\mathbf{N}}

\global\long\def\gausspdf#1#2#3{\text{Normal}\left( #1 \lcabra{#2, #3}\right) }

\global\long\def\multpdf{\text{Mult}}

\global\long\def\poiss{\text{Pois}}

\global\long\def\poissonpdf{\text{Poisson}}

\global\long\def\pgpdf{\text{PG}}

\global\long\def\wshpdf{\text{Wish}}

\global\long\def\iwshpdf{\text{InvWish}}

\global\long\def\nwpdf{\text{NW}}

\global\long\def\niwpdf{\text{NIW}}

\global\long\def\studentpdf{\text{Student}}

\global\long\def\unipdf{\text{Uni}}

\global\long\def\transp#1{\transpose{#1}}
 \global\long\def\transpose#1{#1^{\mathsf{T}}}

\global\long\def\mgt{\succ}

\global\long\def\mge{\succeq}

\global\long\def\idenmat{\mathbf{I}}

\global\long\def\trace{\mathrm{tr}}

\global\long\def\argmax#1{\underset{_{#1}}{\text{argmax}} }

\global\long\def\argmin#1{\underset{_{#1}}{\text{argmin}\ } }

\global\long\def\diag{\text{diag}}

\global\long\def\norm{}

\global\long\def\spn{\text{span}}

\global\long\def\vtspace{\mathcal{V}}

\global\long\def\field{\mathcal{F}}
 \global\long\def\ffield{\mathcal{F}}

\global\long\def\inner#1#2{\left\langle #1,#2\right\rangle }
 \global\long\def\iprod#1#2{\inner{#1}{#2}}

\global\long\def\dprod#1#2{#1 \cdot#2}

\global\long\def\norm#1{\left\Vert #1\right\Vert }

\global\long\def\entro{\mathbb{H}}

\global\long\def\entropy{\mathbb{H}}

\global\long\def\Entro#1{\entro\left[#1\right]}

\global\long\def\Entropy#1{\Entro{#1}}

\global\long\def\mutinfo{\mathbb{I}}

\global\long\def\relH{\mathit{D}}

\global\long\def\reldiv#1#2{\relH\left(#1||#2\right)}

\global\long\def\KL{KL}

\global\long\def\KLdiv#1#2{\KL\left(#1\parallel#2\right)}
 \global\long\def\KLdivergence#1#2{\KL\left(#1\ \parallel\ #2\right)}

\global\long\def\crossH{\mathcal{C}}
 \global\long\def\crossentropy{\mathcal{C}}

\global\long\def\crossHxy#1#2{\crossentropy\left(#1\parallel#2\right)}

\global\long\def\breg{\text{BD}}

\global\long\def\lcabra#1{\left|#1\right.}

\global\long\def\lbra#1{\lcabra{#1}}

\global\long\def\rcabra#1{\left.#1\right|}

\global\long\def\rbra#1{\rcabra{#1}}

\global\long\def\model{\text{MGAN}}

\section{Introduction}

Generative Adversarial Nets (GANs) \citep{goodfellow2014generative}
are a recent novel class of deep generative models that are successfully
applied to a large variety of applications such as image, video generation,
image inpainting, semantic segmentation, image-to-image translation,
and text-to-image synthesis, to name a few \citep{goodfellow2016nips}.
From the game theory metaphor, the model consists of a discriminator
and a generator playing a two-player minimax game, wherein the generator
aims to generate samples that resemble those in the training data
whilst the discriminator tries to distinguish between the two as narrated
in \citep{goodfellow2014generative}. Training GAN, however, is challenging
as it can be easily trapped into the mode collapsing problem where
the generator only concentrates on producing samples lying on a few
modes instead of the whole data space \citep{goodfellow2016nips}.

Many GAN variants have been recently proposed to address this problem.
They can be grouped into two main categories: training either a single
generator or many generators. Methods in the former include modifying
the discriminator's objective \citep{salimans2016improved,metz2016unrolled},
modifying the generator's objective \citep{warde2016improving}, or
employing additional discriminators to yield more useful gradient
signals for the generators \citep{tu_etal_nips17_d2gan,durugkar2016generative}.
The common theme in these variants is that generators are shown, at
equilibrium, to be able to recover the data distribution, but convergence
remains elusive in practice. Most experiments are conducted on toy
datasets or on narrow-domain datasets such as LSUN \citep{yu2015lsun}
or CelebA \citep{liu2015deep}. To our knowledge, only \citet{warde2016improving}
and \citet{tu_etal_nips17_d2gan} perform quantitative evaluation
of models trained on much more diverse datasets such as STL-10 \citep{coates2011analysis}
and ImageNet \citep{russakovsky2015imagenet}.

Given current limitations in the training of single-generator GANs,
some very recent attempts have been made following the multi-generator
approach. \citet{tolstikhin2017adagan} apply boosting techniques
to train a mixture of generators by sequentially training and adding
new generators to the mixture. However, sequentially training many
generators is computational expensive. Moreover, this approach is
built on the implicit assumption that a single-generator GAN can generate
very good images of some modes, so reweighing the training data and
incrementally training new generators will result in a mixture that
covers the whole data space. This assumption is not true in practice
since current single-generator GANs trained on diverse datasets such
as ImageNet tend to generate images of unrecognizable objects. \citet{arora2017generalization}
train a mixture of generators and discriminators, and optimize the
minimax game with the reward function being the weighted average reward
function between any pair of generator and discriminator. This model
is computationally expensive and lacks a mechanism to enforce the
divergence among generators. \citet{ghosh2017multi}  train many
generators by using a multi-class discriminator that, in addition
to detecting whether a data sample is fake, predicts which generator
produces the sample. The objective function in this model punishes
generators for generating samples that are detected as fake but does
not directly encourage generators to specialize in generating different
types of data.

We propose in this paper a novel approach to train a mixture of generators.
Unlike aforementioned multi-generator GANs, our proposed model simultaneously
trains a set of generators with the objective that the mixture of
their induced distributions would approximate the data distribution,
whilst encouraging them to specialize in different data modes. The
result is a novel adversarial architecture formulated as a minimax
game among three parties: a classifier, a discriminator, and a set
of generators. Generators create samples that are intended to come
from the same distribution as the training data, whilst the discriminator
determines whether samples are true data or generated by generators,
and the classifier specifies which generator a sample comes from.
We term our proposed model as \emph{Mixture Generative Adversarial
Nets (MGAN)}. We provide analysis that our model is optimized towards
minimizing the Jensen-Shannon Divergence (JSD) between the mixture
of distributions induced by the generators and the data distribution
while maximizing the JSD among generators.

Empirically, our proposed model can be trained efficiently by utilizing
parameter sharing among generators, and between the classifier and
the discriminator. In addition, simultaneously training many generators
while enforcing JSD among generators helps each of them focus on some
modes of the data space and learn better. Trained on CIFAR-10, each
generator learned to specialize in generating samples from a different
class such as horse, car, ship, dog, bird or airplane. Overall, the
models trained on the CIFAR-10, STL-10 and ImageNet datasets successfully
generated diverse, recognizable objects and achieved state-of-the-art
Inception scores \citep{salimans2016improved}. The model trained
on the CIFAR-10 even outperformed GANs trained in a semi-supervised
fashion \citep{salimans2016improved,odena2016conditional}.

In short, our main contributions are: (i) a novel adversarial model
to efficiently train a mixture of generators while enforcing the JSD
among the generators; (ii) a theoretical analysis that our objective
function is optimized towards minimizing the JSD between the mixture
of all generators' distributions and the real data distribution, while
maximizing the JSD among generators; and (iii) a comprehensive evaluation
on the performance of our method on both synthetic and real-world
large-scale datasets of diverse natural scenes.

\section{Generative Adversarial Nets}

Given the discriminator $D$ and generator $G$, both parameterized
via neural networks, training GAN can be formulated as the following
minimax objective function:
\begin{equation}
\min_{G}\max_{D}\mathbb{E}_{\bx\sim P_{data}\left(\mathbf{x}\right)}\left[\log D\left(\mathbf{x}\right)\right]+\mathbb{E}_{\mathbf{z}\sim P_{\bz}}\left[\log\left(1-D\left(G\left(\mathbf{z}\right)\right)\right)\right]\label{eq:gan}
\end{equation}
where $\bx$ is drawn from data distribution $P_{data}$, $\bz$ is
drawn from a prior distribution $P_{\bz}$. The mapping $G\left(\bz\right)$
induces a generator distribution $P_{model}$ in data space. GAN alternatively
optimizes $D$ and $G$ using stochastic gradient-based learning.
As a result, the optimization order in \ref{eq:gan} can be reversed,
causing the minimax formulation to become maximin. $G$ is therefore
incentivized to map every $\bz$ to a single $\bx$ that is most likely
to be classified as true data, leading to mode collapsing problem.
Another commonly asserted cause of generating less diverse samples
in GAN is that, at the optimal point of $D$, minimizing $G$ is equivalent
to minimizing the JSD between the data and model distributions, which
has been empirically proven to prefer to generate samples around only
a few modes whilst ignoring other modes \citep{huszar2015not,Theis2015}.

\section{Proposed Mixture GANs}

We now present our main contribution of a novel approach that can
effectively tackle mode collapse in GAN. Our idea is to use a mixture
of many distributions rather than a single one as in the standard
GAN, to approximate the data distribution, and simultaneously we enlarge
the divergence of those distributions so that they cover different
data modes.

To this end, an analogy to a game among $\K$ generators $G_{1:\K}$,
a discriminator $D$ and a classifier $C$ can be formulated. Each
generator $G_{k}$ maps $\bz$ to $\bx=G_{k}\left(\bz\right)$, thus
inducing a single distribution $P_{G_{k}}$; and $\K$ generators
altogether induce a mixture over $\K$ distributions, namely $P_{model}$
in the data space. An index $u$ is drawn from a multinomial distribution
$\text{Mult}\left(\bpi\right)$ where $\bpi=\left[\pi_{1},\pi_{2},...,\pi_{\K}\right]$
is the coefficients of the mixture; and then the sample $G_{u}\left(\bz\right)$
is used as the output. Here, we use a predefined $\bpi$ and fix it
instead of learning. The discriminator $D$ aims to distinguish between
this sample and the training samples. The classifier $C$ performs
multi-class classification to classify samples labeled by the indices
of their corresponding generators. We term this whole process and
our model the \emph{Mixture Generative Adversarial Nets} (MGAN).

\begin{figure}
\noindent \centering{}\includegraphics[width=1\textwidth]{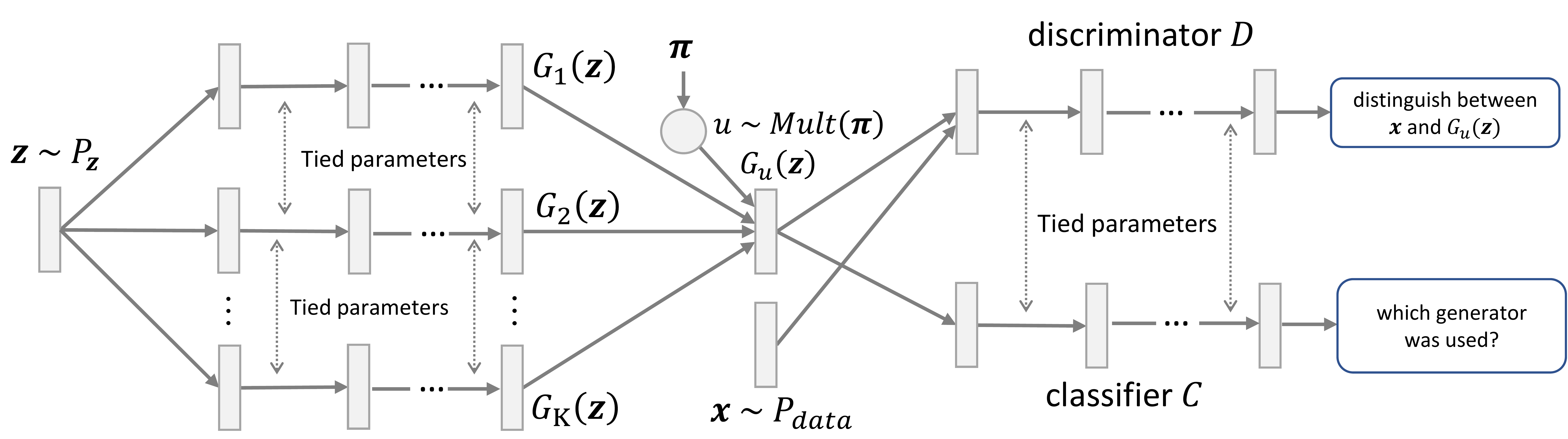}\caption{$\protect\model$'s architecture with $\protect\K$ generators, a
binary discriminator, a multi-class classifier.\label{diagram}}
\end{figure}

Fig.~\ref{diagram} illustrates the general architecture of our proposed
MGAN, where all components are parameterized by neural networks. $G_{k}$
(s) tie their parameters together except the input layer, whilst $C$
and $D$ share parameters except the output layer. This parameter
sharing scheme enables the networks to leverage their common information
such as features at low-level layers that are close to the data layer,
hence helps to train model effectively. In addition, it also minimizes
the number of parameters and adds minimal complexity to the standard
GAN, thus the whole process is still very efficient.

More formally, $D$, $C$ and $G_{1:\K}$ now play the following multi-player
minimax optimization game:
\begin{align}
\min_{G_{1:\K},C}\max_{D}\mathcal{J}\left(G_{1:\K},C,D\right) & =\mathbb{E}_{\bx\sim P_{data}}\left[\log D\left(\mathbf{x}\right)\right]+\mathbb{E}_{\bx\sim P_{model}}\left[\log\left(1-D\left(\mathbf{x}\right)\right)\right]\nonumber \\
 & \,\,\,\,\,\,-\beta\left\{ \sum_{k=1}^{\K}\pi_{k}\mathbb{E}_{\mathbf{x}\sim P_{G_{k}}}\left[\log C_{k}\left(\mathbf{x}\right)\right]\right\} \label{game_formula}
\end{align}
where $C_{k}\left(\mathbf{x}\right)$ is the probability that $\mathbf{x}$
is generated by $G_{k}$ and $\beta>0$ is the diversity hyper-parameter.
The first two terms show the interaction between generators and the
discriminator as in the standard GAN. The last term should be recognized
as a standard softmax loss for a multi-classification setting, which
aims to maximize the entropy for the classifier. This represents the
interaction between generators and the classifier, which encourages
each generator to produce data separable from those produced by other
generators. The strength of this interaction is controlled by $\beta$.
Similar to GAN, our proposed network can be trained by alternatively
updating $D$, $C$ and $G_{1:\K}$. We refer to Appendix~\ref{sec:appendix_framework}
for the pseudo-code and algorithms for parameter learning for our
proposed MGAN.

\subsection{Theoretical Analysis\label{theoretical_analysis}}

Assuming all $C$, $D$ and $G_{1:\K}$ have enough capacity, we show
below that at the equilibrium point of the minimax problem in Eq.~(\ref{game_formula}),
the JSD between the mixture induced by $G_{1:\K}$ and the data distribution
is minimal, i.e. $p_{data}=p_{model}$, and the JSD among $\K$ generators
is maximal, i.e. two arbitrary generators almost never produce the
same data. In what follows we present our mathematical statement and
the sketch of their proofs. We refer to Appendix~\ref{sec:appendix_proofs}
for full derivations.
\begin{prop}
\label{Proposition1}For fixed generators $G_{1}$, $G_{2}$, ...,
$G_{\K}$ and their mixture weights $\pi_{1},\pi_{2},...,\pi_{\K}$,
the optimal solution $C^{*}=C_{1:\K}^{*}$ and $D^{*}$ for $\mathcal{J}\left(G_{1:\K},C,D\right)$
in Eq.~(\ref{game_formula}) are:
\[
C_{k}^{*}\left(\mathbf{x}\right)=\frac{\pi_{k}p_{G_{k}}\left(\mathbf{x}\right)}{\sum_{j=1}^{\K}\pi_{j}p_{G_{j}}\left(\mathbf{x}\right)}\,\,\,\,\textnormal{and}\,\,\,\,D^{*}\left(\mathbf{x}\right)=\frac{p_{data}\left(\mathbf{x}\right)}{p_{data}\left(\mathbf{x}\right)+p_{model}\left(\mathbf{x}\right)}
\]
\end{prop}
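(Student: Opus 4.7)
The plan is to exploit the fact that the objective $\mathcal{J}(G_{1:K}, C, D)$ decouples into a part depending only on $D$ (the first two expectations) and a part depending only on $C$ (the last expectation, which is the only one involving $C_k$). Since the generators and mixture weights are fixed, each part can be optimized independently of the other, and in both cases the optimization reduces to a pointwise calculation in $\bx$-space.

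First I would handle $D^*$. Writing the first two terms as an integral,
\begin{equation*}
\int p_{data}(\bx) \log D(\bx)\, d\bx + \int p_{model}(\bx) \log\bigl(1-D(\bx)\bigr)\, d\bx,
\end{equation*}
and noting that $p_{model}(\bx) = \sum_{k=1}^{\K} \pi_k p_{G_k}(\bx)$, the integrand at each $\bx$ has the form $a\log y + b\log(1-y)$ with $a = p_{data}(\bx)$, $b = p_{model}(\bx)$, $y = D(\bx) \in (0,1)$. Elementary calculus gives the unique maximizer $y = a/(a+b)$, yielding the claimed $D^*(\bx) = p_{data}(\bx)/(p_{data}(\bx) + p_{model}(\bx))$. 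This is exactly the classical Goodfellow et~al.\ computation.

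Next I would handle $C^*$. Rewriting the last term as an integral and swapping the finite sum with the integral,
\begin{equation*}
-\beta \int \sum_{k=1}^{\K} \pi_k p_{G_k}(\bx)\, \log C_k(\bx)\, d\bx,
\end{equation*}
and observing that minimizing over $C = (C_1, \dots, C_{\K})$ with the simplex constraint $\sum_k C_k(\bx) = 1$, $C_k(\bx) \geq 0$, amounts to maximizing $\sum_k \alpha_k(\bx) \log C_k(\bx)$ pointwise with $\alpha_k(\bx) := \pi_k p_{G_k}(\bx)$. This is the standard ``cross-entropy is minimized by the true conditional'' problem: by Gibbs' inequality (or a one-line Lagrange multiplier argument), the maximizer is $C_k(\bx) = \alpha_k(\bx)/\sum_j \alpha_j(\bx)$, which is precisely the claimed $C_k^*(\bx)$.

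There is no serious obstacle here; the only mild subtlety is a measurability/support caveat. Where $\sum_j \pi_j p_{G_j}(\bx) = 0$ the expression for $C_k^*(\bx)$ is formally $0/0$, but such $\bx$ form a $P_{G_k}$-null set for every $k$, so $C^*$ may be defined arbitrarily on that set without affecting $\mathcal{J}$; similarly $D^*(\bx)$ is irrelevant wherever $p_{data}+p_{model}$ vanishes. Finally, since the two optimizations in $D$ and $C$ are independent and each has a unique (a.e.) maximizer, the pair $(C^*, D^*)$ is the unique saddle-type optimum of the inner problem, completing the proof.
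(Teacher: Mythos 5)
Your proposal is correct and takes essentially the same route as the paper: the $D$-part is deferred to the classical pointwise maximization of $a\log y+b\log(1-y)$ from Goodfellow et al., and the $C$-part is reduced to the pointwise weighted cross-entropy problem over the simplex, whose solution is $C_{k}^{*}\left(\mathbf{x}\right)=\pi_{k}p_{G_{k}}\left(\mathbf{x}\right)/\sum_{j}\pi_{j}p_{G_{j}}\left(\mathbf{x}\right)$. The only cosmetic difference is that the paper solves this pointwise problem by substituting $C_{1}=1-\sum_{k\geq2}C_{k}$ and setting functional derivatives to zero to obtain the proportionality $\pi_{k}p_{G_{k}}\left(\mathbf{x}\right)/C_{k}\left(\mathbf{x}\right)$ constant in $k$, whereas you invoke Gibbs' inequality (or Lagrange multipliers), which yields global optimality directly; your measure-zero support caveat is a small refinement the paper omits.
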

\begin{proof}
It can be seen that the solution $C_{k}^{*}$ is a general case of
$D^{*}$ when $D$ classifies samples from two distributions with
equal weight of $\nicefrac{1}{2}$. We refer the proofs for $D^{*}$
to Prop.~1 in \citep{goodfellow2014generative}, and our proof for
$C_{k}^{*}$ to Appendix~\ref{sec:appendix_proofs} in this manuscript.
\end{proof}
Based on Prop.~\ref{Proposition1}, we further show that at the equilibrium
point of the minimax problem in Eq.~(\ref{game_formula}), the optimal
generator $G^{*}=\left[G_{1}^{*},...,G_{\K}^{*}\right]$ induces the
generated distribution $p_{model}^{*}\left(\bx\right)=\sum_{k=1}^{\K}\pi_{k}p_{G_{k}^{*}}\left(\bx\right)$
which is as closest as possible to the true data distribution $p_{data}\left(\bx\right)$
while maintaining the mixture components $p_{G_{k}^{*}}\left(\bx\right)$(s)
as furthest as possible to avoid the mode collapse.
\begin{thm}
\label{thm:equilibrium_point}At the equilibrium point of the minimax
problem in Eq.~(\ref{game_formula}), the optimal $G^{*},D^{*}$,
and $C^{*}$ satisfy
\begin{align}
G^{*} & =\argmin G\left(2\cdot\textnormal{JSD}\left(P_{data}\Vert P_{model}\right)-\beta\cdot\textnormal{JSD}_{\bpi}\left(P_{G_{1}},P_{G_{2}},...,P_{G_{\K}}\right)\right)\label{eq:optimal}\\
 & C_{k}^{*}\left(\mathbf{x}\right)=\frac{\pi_{k}p_{G_{k}^{*}}\left(\mathbf{x}\right)}{\sum_{j=1}^{\K}\pi_{j}p_{G_{j}^{*}}\left(\mathbf{x}\right)}\,\,\,\,\textnormal{and}\,\,\,\,D^{*}\left(\mathbf{x}\right)=\frac{p_{data}\left(\mathbf{x}\right)}{p_{data}\left(\mathbf{x}\right)+p_{model}\left(\mathbf{x}\right)}\nonumber 
\end{align}
\end{thm}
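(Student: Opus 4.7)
The plan is to plug the pointwise-optimal $C^{*}$ and $D^{*}$ from Proposition~\ref{Proposition1} into $\mathcal{J}$, reduce the minimax problem to a ``virtual'' objective in $G_{1:\K}$ alone, and read off $G^{*}$ as its minimizer. Note first that $C$ appears in $\mathcal{J}$ only inside the negated third term, so minimizing $\mathcal{J}$ over $C$ is the standard Bayes-optimal multi-class classifier problem whose unique solution is the $C^{*}$ of Prop.~\ref{Proposition1}; likewise $D^{*}$ is the unique maximizer over $D$. Because both optimizers are pointwise unique for every $G_{1:\K}$, no delicate min-max swap is needed: the problem reduces to $\min_{G_{1:\K}} V(G_{1:\K})$ where $V(G_{1:\K}) := \mathcal{J}(G_{1:\K},C^{*},D^{*})$.

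I would then evaluate $V$ term by term. The first two terms, by the original GAN computation of \citet{goodfellow2014generative}, collapse to $2\,\textnormal{JSD}\left(P_{data}\Vert P_{model}\right) - 2\log 2$. For the third term, substituting $C_k^{*}(\bx) = \pi_k p_{G_k}(\bx)/p_{model}(\bx)$ and splitting the logarithm gives
\[
\sum_{k=1}^{\K}\pi_k\,\mathbb{E}_{P_{G_k}}\left[\log C_k^{*}(\bx)\right] = -\entro(\bpi) + \sum_{k=1}^{\K}\pi_k\,\KLdiv{P_{G_k}}{P_{model}},
\]
and the remaining sum is exactly $\textnormal{JSD}_{\bpi}(P_{G_1},\ldots,P_{G_\K})$ by the weighted-JSD identity $\textnormal{JSD}_{\bpi}(P_1,\ldots,P_\K) = \sum_k \pi_k\,\KLdiv{P_k}{\sum_j \pi_j P_j}$ together with $p_{model} = \sum_j \pi_j p_{G_j}$.

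Combining the two pieces yields $V(G_{1:\K}) = 2\,\textnormal{JSD}\left(P_{data}\Vert P_{model}\right) - \beta\,\textnormal{JSD}_{\bpi}(P_{G_1},\ldots,P_{G_\K}) + \beta\,\entro(\bpi) - 2\log 2$. Since the last two summands do not depend on $G_{1:\K}$, minimizing $V$ over $G_{1:\K}$ gives precisely the characterization of $G^{*}$ in (\ref{eq:optimal}), and the expressions for $C^{*}$ and $D^{*}$ at $G^{*}$ are inherited from Prop.~\ref{Proposition1}.

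The step I expect to be the main obstacle is the algebraic identification of the weighted classifier cross-entropy with the generalized Jensen-Shannon divergence; once one invokes $\textnormal{JSD}_{\bpi}(P_1,\ldots,P_\K) = \entro(\sum_k \pi_k P_k) - \sum_k \pi_k\,\entro(P_k) = \sum_k \pi_k\,\KLdiv{P_k}{\sum_j \pi_j P_j}$, the rest is routine bookkeeping. No other subtlety is anticipated: the reduction through $V$ is justified by the pointwise uniqueness of $C^{*},D^{*}$, and the first-two-term simplification is exactly the original GAN theorem.
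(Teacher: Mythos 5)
Your proposal is correct and follows essentially the same route as the paper: substitute the optimal $C^{*}$ and $D^{*}$ from Prop.~\ref{Proposition1} into $\mathcal{J}$, reduce the first two terms to $2\cdot\textnormal{JSD}\left(P_{data}\Vert P_{model}\right)-\log 4$ via the original GAN argument, and identify the classifier term with $\textnormal{JSD}_{\bpi}\left(P_{G_{1}},\dots,P_{G_{\K}}\right)$ up to the constant $\beta\,\entro\left(\bpi\right)$. The only (cosmetic) difference is that you express the weighted JSD through the KL form $\sum_{k}\pi_{k}\,\KLdiv{P_{G_{k}}}{P_{model}}$ while the paper uses the equivalent entropy decomposition, and your explicit remark on the pointwise uniqueness of $C^{*},D^{*}$ is a slightly more careful justification of the same substitution step.
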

\begin{proof}
Substituting $C_{1:\K}^{*}$ and $D^{*}$ into Eq.~(\ref{game_formula}),
we reformulate the objective function for $G_{1:\K}$ as follows:{\small{}
\begin{alignat}{1}
\mathcal{\mathcal{L}}\left(G_{1:\K}\right) & =\mathbb{E}_{\bx\sim P_{data}}\left[\log\frac{p_{data}\left(\mathbf{x}\right)}{p_{data}\left(\mathbf{x}\right)+p_{model}\left(\mathbf{x}\right)}\right]+\mathbb{E}_{\bx\sim P_{model}}\left[\log\frac{p_{model}\left(\mathbf{x}\right)}{p_{data}\left(\mathbf{x}\right)+p_{model}\left(\mathbf{x}\right)}\right]\nonumber \\
 & \,\,\,\,\,\,-\beta\left\{ \sum_{k=1}^{\K}\pi_{k}\mathbb{E}_{\bx\sim P_{G_{k}}}\left[\log\frac{\pi_{k}p_{G_{k}}\left(\mathbf{x}\right)}{\sum_{j=1}^{\K}\pi_{j}p_{G_{j}}\left(\mathbf{x}\right)}\right]\right\} \nonumber \\
= & 2\cdot\textnormal{JSD}\left(P_{data}\Vert P_{model}\right)-\log4-\beta\left\{ \sum_{k=1}^{\K}\pi_{k}\mathbb{E}_{\bx\sim P_{G_{k}}}\left[\log\frac{p_{G_{k}}\left(\mathbf{x}\right)}{\sum_{j=1}^{\K}\pi_{j}p_{G_{j}}\left(\mathbf{x}\right)}\right]\right\} -\beta\sum_{k=1}^{\K}\pi_{k}\log\pi_{k}\nonumber \\
= & 2\cdot\textnormal{JSD}\left(P_{data}\Vert P_{model}\right)-\beta\cdot\textnormal{JSD}_{\bpi}\left(P_{G_{1}},P_{G_{2}},...,P_{G_{\K}}\right)-\log4-\beta\sum_{k=1}^{\K}\pi_{k}\log\pi_{k}\label{eq:reloss_G}
\end{alignat}
}Since the last two terms in Eq.~(\ref{eq:reloss_G}) are constant,
that concludes our proof.
\end{proof}
This theorem shows that progressing towards the equilibrium is equivalently
to minimizing $\textnormal{JSD}\left(P_{data}\Vert P_{model}\right)$
while maximizing $\textnormal{JSD}_{\bpi}\left(P_{G_{1}},P_{G_{2}},...,P_{G_{\K}}\right)$.
In the next theorem, we further clarify the equilibrium point for
the specific case wherein the data distribution has the form $p_{data}\left(\bx\right)=\sum_{k=1}^{\K}\pi_{k}q_{k}\left(\bx\right)$
where the mixture components $q_{k}\left(\bx\right)$(s) are well-separated
in the sense that $\mathbb{E}_{\bx\sim Q_{k}}\left[q_{j}\left(\bx\right)\right]=0$
for $j\neq k$, i.e., for almost everywhere $\bx$, if $q_{k}\left(\bx\right)>0$
then $q_{j}\left(\bx\right)=0,\,\forall j\neq k$.
\begin{thm}
\label{theorem_global_minimum}If the data distribution has the form:
$p_{data}\left(\bx\right)=\sum_{k=1}^{\K}\pi_{k}q_{k}\left(\bx\right)$
where the mixture components $q_{k}\left(\bx\right)$(s) are well-separated,
the minimax problem in Eq.~(\ref{game_formula}) or the optimization
problem in Eq. (\ref{eq:optimal}) has the following solution:
\[
p_{G_{k}^{*}}\left(\bx\right)=q_{k}\left(\bx\right),\,\forall k=1,\dots,\K\,\,\text{and}\,\,p_{model}\left(\bx\right)=\sum_{k=1}^{\K}\pi_{k}q_{k}\left(\bx\right)=p_{data}\left(\bx\right)
\]
, and the corresponding objective value of the optimization problem
in Eq. (\ref{eq:optimal}) is $-\beta\mathbb{H}\left(\bpi\right)=-\beta\sum_{k=1}^{\K}\pi_{k}\log\frac{1}{\pi_{k}}$,
where $\mathbb{H}\left(\bpi\right)$ is the Shannon entropy.\end{thm}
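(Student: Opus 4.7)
The plan is to exploit the reformulation established in Theorem \ref{thm:equilibrium_point}, which reduces the minimax problem to minimizing
$$F(G_{1:\K}) := 2\cdot\textnormal{JSD}(P_{data}\Vert P_{model}) - \beta\cdot\textnormal{JSD}_{\bpi}(P_{G_1},\dots,P_{G_{\K}}).$$
My strategy is to produce a universal lower bound for $F$ that depends only on $\bpi$, and then exhibit the claimed solution $p_{G_k^*}=q_k$ as an attainer of that bound.

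First, I would establish the upper bound $\textnormal{JSD}_{\bpi}(P_{G_1},\dots,P_{G_{\K}}) \leq \mathbb{H}(\bpi)$, with equality iff the generators have pairwise disjoint supports (up to a null set). Using the standard representation $\textnormal{JSD}_{\bpi}(P_{G_1},\dots,P_{G_{\K}}) = \sum_k \pi_k \textnormal{KL}(P_{G_k}\Vert P_{model})$ with $P_{model}=\sum_j \pi_j P_{G_j}$, the key pointwise fact is that for every $\bx$ with $p_{G_k}(\bx)>0$ one has $p_{model}(\bx) \geq \pi_k p_{G_k}(\bx)$, hence $\log\frac{p_{G_k}(\bx)}{p_{model}(\bx)} \leq \log\frac{1}{\pi_k}$. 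Integrating against $p_{G_k}$ gives $\textnormal{KL}(P_{G_k}\Vert P_{model}) \leq \log(1/\pi_k)$, and summing with weights $\pi_k$ yields the claim; equality holds iff $p_{model}(\bx) = \pi_k p_{G_k}(\bx)$ on the support of $P_{G_k}$, i.e., iff all other $P_{G_j}$ vanish there. Combined with the nonnegativity of $\textnormal{JSD}(P_{data}\Vert P_{model})$, this gives the uniform lower bound $F(G_{1:\K}) \geq -\beta \mathbb{H}(\bpi)$.

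Next, I would verify that the candidate $p_{G_k^*}=q_k$ attains both bounds simultaneously. For the first, $p_{model}^* = \sum_k \pi_k q_k = p_{data}$ by construction, so $\textnormal{JSD}(P_{data}\Vert P_{model}^*)=0$. For the second, the well-separation hypothesis $\mathbb{E}_{\bx\sim Q_k}[q_j(\bx)]=0$ for $j\neq k$ means $\int q_k(\bx)q_j(\bx)\,d\bx = 0$; since the integrand is non-negative, $q_k q_j = 0$ almost everywhere, so the supports of the $q_k$ are pairwise disjoint up to a null set. By the equality criterion from the previous step, $\textnormal{JSD}_{\bpi}(Q_1,\dots,Q_{\K}) = \mathbb{H}(\bpi)$. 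Therefore $F(G_{1:\K}^*) = 0 - \beta \mathbb{H}(\bpi) = -\beta \mathbb{H}(\bpi)$, matching the lower bound; hence $G_{1:\K}^*$ is a global minimizer of the optimization problem \eqref{eq:optimal}, and the accompanying forms of $C^*$ and $D^*$ are immediate from Proposition \ref{Proposition1}.

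The only mildly delicate point is making the equality case of $\textnormal{JSD}_{\bpi} \leq \mathbb{H}(\bpi)$ precise in the continuous setting and cleanly translating the well-separation condition into disjoint essential supports; both are routine measure-theoretic observations but should be stated carefully so that the chain of equalities at $G_{1:\K}^*$ is unambiguous. Everything else is a direct substitution into the expression for $F$ derived in Theorem \ref{thm:equilibrium_point}.
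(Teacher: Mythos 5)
Your proposal is correct and follows essentially the same route as the paper's proof: bound $\textnormal{JSD}_{\bpi}\left(P_{G_{1}},\dots,P_{G_{\K}}\right)\leq\mathbb{H}\left(\bpi\right)$ via the pointwise inequality $\pi_{k}p_{G_{k}}\left(\bx\right)\leq p_{model}\left(\bx\right)$ (the paper bounds each expectation by $0$ directly, you repackage the same inequality as $\textnormal{KL}\left(P_{G_{k}}\Vert P_{model}\right)\leq\log\frac{1}{\pi_{k}}$), combine with $\textnormal{JSD}\left(P_{data}\Vert P_{model}\right)\geq0$ to get the lower bound $-\beta\mathbb{H}\left(\bpi\right)$, and check that $p_{G_{k}^{*}}=q_{k}$ attains both equalities under the well-separation hypothesis. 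Your explicit observation that $\int q_{k}q_{j}\,d\bx=0$ forces $q_{k}q_{j}=0$ almost everywhere is a small but welcome clarification of what the paper states only informally.
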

\begin{proof}
Please refer to our proof in Appendix~\ref{sec:appendix_proofs}
of this manuscript.
\end{proof}
Thm.~\ref{theorem_global_minimum} explicitly offers the optimal
solution for the specific case wherein the real data are generated
from a mixture distribution whose components are well-separated. This
further reveals that if the mixture components are well-separated,
by setting the number of generators as the number of mixtures in data
and maximizing the divergence between the generated components $p_{G_{k}}\left(\bx\right)$(s),
we can exactly recover the mixture components $q_{k}\left(\bx\right)$(s)
using the generated components $p_{G_{k}}\left(\bx\right)$(s), hence
strongly supporting our motivation when developing $\model$. In practice,
$C$, $D$, and $G_{1:\K}$ are parameterized by neural networks and
are optimized in the parameter space rather than in the function space.
As all generators $G_{1:\K}$ share the same objective function, we
can efficiently update their weights using the same backpropagation
passes. Empirically, we set the parameter $\pi_{k}=\frac{1}{\K},\forall k\in\left\{ 1,...,\K\right\} $,
which further minimizes the objective value $-\beta\mathbb{H}\left(\bpi\right)=-\beta\sum_{k=1}^{\K}\pi_{k}\log\frac{1}{\pi_{k}}$
w.r.t $\bpi$ in Thm.~\ref{theorem_global_minimum}. To simplify
the computational graph, we assume that each generator is sampled
the same number of times in each minibatch. In addition, we adopt
the non-saturating heuristic proposed in \citep{goodfellow2014generative}
to train $G_{1:\K}$ by maximizing $\log D\left(G_{k}\left(\mathbf{z}\right)\right)$
instead of minimizing $\log D\left(1-G_{k}\left(\mathbf{z}\right)\right)$
.

\section{Related Work}

Recent attempts to address the mode collapse by modifying the discriminator
include minibatch discrimination \citep{salimans2016improved}, Unrolled
GAN \citep{metz2016unrolled} and Denoising Feature Matching (DFM)
\citep{warde2016improving}. The idea of minibatch discrimination
is to allow the discriminator to detect samples that are noticeably
similar to other generated samples. Although this method can generate
visually appealing samples, it is computationally expensive, thus
normally used in the last hidden layer of discriminator. Unrolled
GAN improves the learning by unrolling computational graph to include
additional optimization steps of the discriminator. It could effectively
reduce the mode collapsing problem, but the unrolling step is expensive,
rendering it unscalable up to large-scale datasets. DFM augments the
objective function of generator with one of a Denoising AutoEncoder
(DAE) that minimizes the reconstruction error of activations at the
penultimate layer of the discriminator. The idea is that gradient
signals from DAE can guide the generator towards producing samples
whose activations are close to the manifold of real data activations.
DFM is surprisingly effective at avoiding mode collapse, but the involvement
of a deep DAE adds considerable computational cost to the model.

An alternative approach is to train additional discriminators. D2GAN
\citep{tu_etal_nips17_d2gan} employs two discriminators to minimize
both Kullback-Leibler (KL) and reverse KL divergences, thus placing
a fair distribution across the data modes. This method can avoid the
mode collapsing problem to a certain extent, but still could not outperform
DFM. Another work uses many discriminators to boost the learning of
generator \citep{durugkar2016generative}. The authors state that
this method is robust to mode collapse, but did not provide experimental
results to support that claim.

Another direction is to train multiple generators. The so-called MIX+GAN
\citep{arora2017generalization} is related to our model in the use
of mixture but the idea is very different. Based on min-max theorem
\citep{neumann1928theorie}, the MIX+GAN trains a mixture of multiple
generators and discriminators with \emph{different parameters} to
play mixed strategies in a min-max game. The total reward of this
game is computed by weighted averaging rewards over all pairs of generator
and discriminator. The lack of parameter sharing renders this method
computationally expensive to train. Moreover, there is no mechanism
to enforce the divergence among generators as in ours.

Some attempts have been made to train a mixture of GANs in a similar
spirit with boosting algorithms. \citet{wang2016ensembles} propose
an additive procedure to incrementally train new GANs on a subset
of the training data that are badly modeled by previous generators.
As the discriminator is expected to classify samples from this subset
as real with high confidence, i.e. $D\left(\mathbf{x}\right)$ is
high, the subset can be chosen to include $\mathbf{x}$ where $D\left(\mathbf{x}\right)$
is larger than a predefined threshold. \citet{tolstikhin2017adagan},
however, show that this heuristic fails to address the mode collapsing
problem. Thus they propose AdaGAN to introduce a robust reweighing
scheme to prepare training data for the next GAN. AdaGAN and boosting-inspired
GANs in general are based on the assumption that a single-generator
GAN can learn to generate impressive images of some modes such as
dogs or cats but fails to cover other modes such as giraffe. Therefore,
removing images of dogs or cats from the training data and train a
next GAN can create a better mixture. This assumption is not true
in practice as current single-generator GANs trained on diverse data
sets such as ImageNet \citep{russakovsky2015imagenet} tend to generate
images of unrecognizable objects.

The most closely related to ours is MAD-GAN \citep{ghosh2017multi}
which trains many generators and uses a multi-class classifier as
the discriminator. In this work, two strategies are proposed to address
the mode collapse: (i) augmenting generator's objective function
with a user-defined similarity based function to encourage different
generators to generate diverse samples, and (ii) modifying discriminator's
objective functions to push different generators towards different
identifiable modes by separating samples of each generator. Our approach
is different in that, rather than modifying the discriminator, we
use an additional classifier that discriminates samples produced by
each generator from those by others under multi-class classification
setting. This nicely results in an optimization problem that maximizes
the JSD among generators, thus naturally enforcing them to generate
diverse samples and effectively avoiding mode collapse.

\section{Experiments}

In this section, we conduct experiments on both synthetic data and
real-world large-scale datasets. The aim of using synthetic data is
to visualize, examine and evaluate the learning behaviors of our proposed
$\model$, whilst using real-world datasets to quantitatively demonstrate
its efficacy and scalability of addressing the mode collapse in a
much larger and wider data space. For fair comparison, we use experimental
settings that are identical to previous work, and hence we quote the
results from the latest state-of-the-art GAN-based models to compare
with ours.

We use TensorFlow \citep{abadi2016tensorflow} to implement our model
and will release the code after publication. For all experiments,
we use: (i) shared parameters among generators in all layers except
for the weights from the input to the first hidden layer; (ii) shared
parameters between discriminator and classifier in all layers except
for the weights from the penultimate layer to the output; (iii) Adam
optimizer \citep{kingma2014adam} with learning rate of 0.0002 and
the first-order momentum of 0.5; (iv) minibatch size of 64 samples
for training discriminators; (v) ReLU activations \citep{nair2010rectified}
for generators; (vi) Leaky ReLU \citep{maas2013rectifier} with slope
of 0.2 for discriminator and classifier; and (vii) weights randomly
initialized from Gaussian distribution $\mathcal{N}(0,0.02\bI)$ and
zero biases. We refer to Appendix~\ref{sec:appendix_exps} for detailed
model architectures and additional experimental results.

\subsection{Synthetic data\label{sec:Synthetic-data}}

In the first experiment, following \citep{tu_etal_nips17_d2gan} we
reuse the experimental design proposed in \citep{metz2016unrolled}
to investigate how well our $\model$ can explore and capture multiple
data modes. The training data is sampled from a 2D mixture of 8 isotropic
Gaussian distributions with a covariance matrix of 0.02$\bI$ and
means arranged in a circle of zero centroid and radius of 2.0. Our
purpose of using such small variance is to create low density regions
and separate the modes.

We employ 8 generators, each with a simple architecture of an input
layer with 256 noise units drawn from isotropic multivariate Gaussian
distribution $\mathcal{N}\left(0,\bI\right)$, and two fully connected
hidden layers with 128 ReLU units each. For the discriminator and
classifier, one hidden layer with 128 ReLU units is used. The diversity
hyperparameter $\beta$ is set to 0.125.

Fig.~\ref{fig:2D_distributions} shows the evolution of 512 samples
generated by our model and baselines through time. It can be seen
that the regular GAN generates data collapsing into a \emph{single}
mode hovering around the valid modes of data distribution, thus reflecting
the mode collapse in GAN as expected. At the same time, UnrolledGAN
\citep{metz2016unrolled}, D2GAN \citep{tu_etal_nips17_d2gan} and
our $\model$ distribute data around \emph{all} 8 mixture components,
and hence demonstrating the abilities to successfully learn multimodal
data in this case. Our proposed model, however, converges much faster
than the other two since it successfully explores and neatly covers
all modes at the early step 15K, whilst two baselines produce samples
cycling around till the last steps. At the end, our $\model$ captures
data modes more precisely than UnrolledGAN and D2GAN since, in each
mode, the UnrolledGAN generates data that concentrate only on several
points around the mode\textquoteright s centroid, thus seems to produce
fewer samples than ours whose samples fairly spread out the entire
mode, but not exceed the boundary whilst the D2GAN still generates
many points scattered between two adjacent modes.

Next we further quantitatively compare the quality of generated data.
Since we know the true distribution $P_{data}$ in this case, we employ
two measures, namely symmetric Kullback-Leibler (KL) divergence and
Wasserstein distance. These measures compute the distance between
the normalized histograms of 10,000 points generated from the model
to true $P_{data}$. Figs.~\ref{fig:2D_symmetric_KL} and \ref{fig:2D_Wasserstein}
again clearly demonstrate the superiority of our approach over GAN,
UnrolledGAN and D2GAN w.r.t both distances (lower is better); notably
the Wasserstein distances from ours and D2GAN's to the true distribution
almost reduce to zero, and at the same time, our symmetric KL metric
is significantly better than that of D2GAN. These figures also show
the stability of our $\model$ (black curves) and D2GAN (red curves)
during training as they are much less fluctuating compared with GAN
(green curves) and UnrolledGAN (blue curves).

\begin{figure}
\noindent \begin{centering}
\begin{minipage}[t]{0.3\textwidth}%
\noindent \begin{center}
\subfloat[Symmetric KL divergence.\label{fig:2D_symmetric_KL}]{\noindent \begin{centering}
\includegraphics[width=1\textwidth]{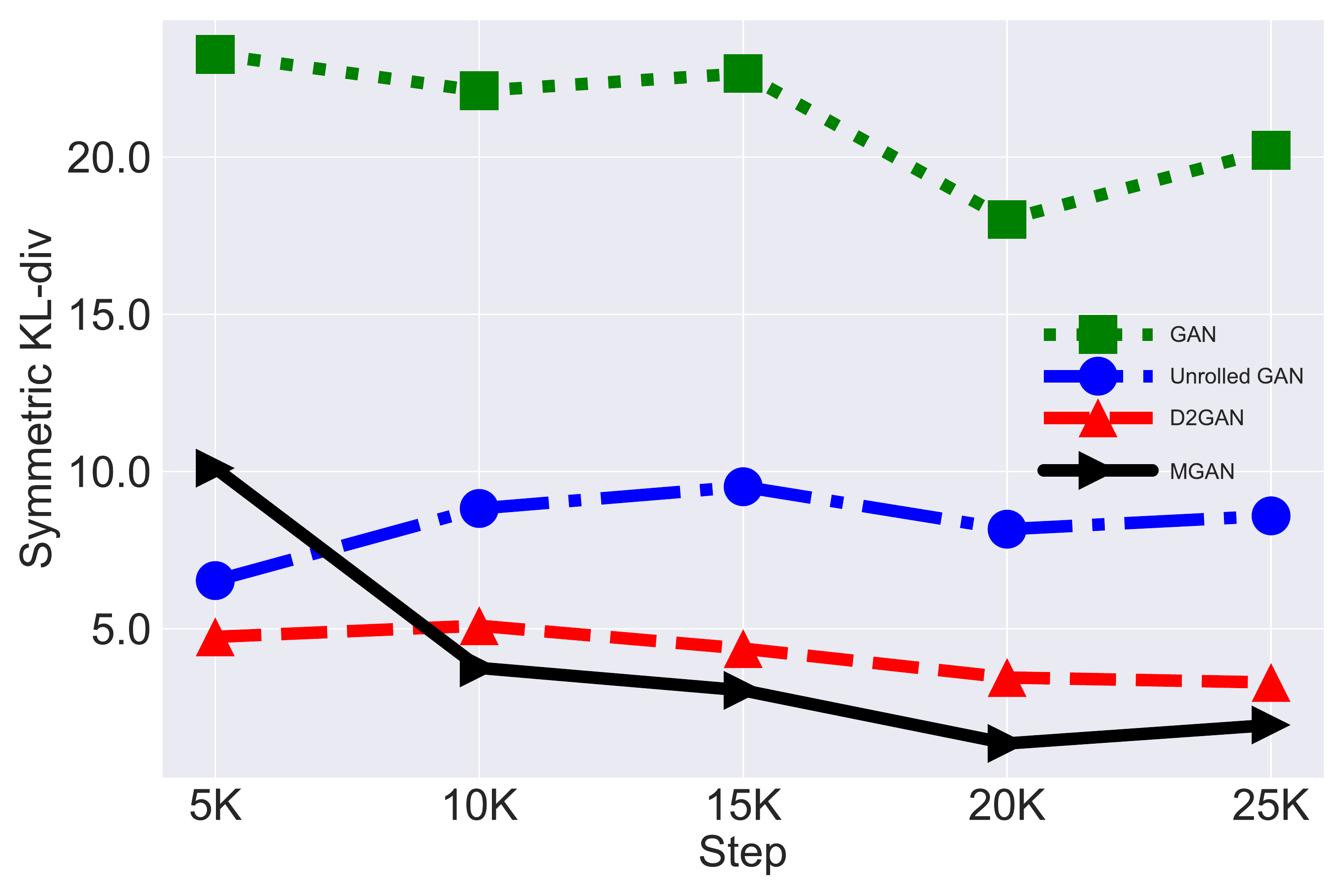}
\par\end{centering}

\noindent \centering{}}
\par\end{center}

\noindent \begin{center}
\subfloat[Wasserstein distance.\label{fig:2D_Wasserstein}]{\noindent \begin{centering}
\includegraphics[width=1\textwidth]{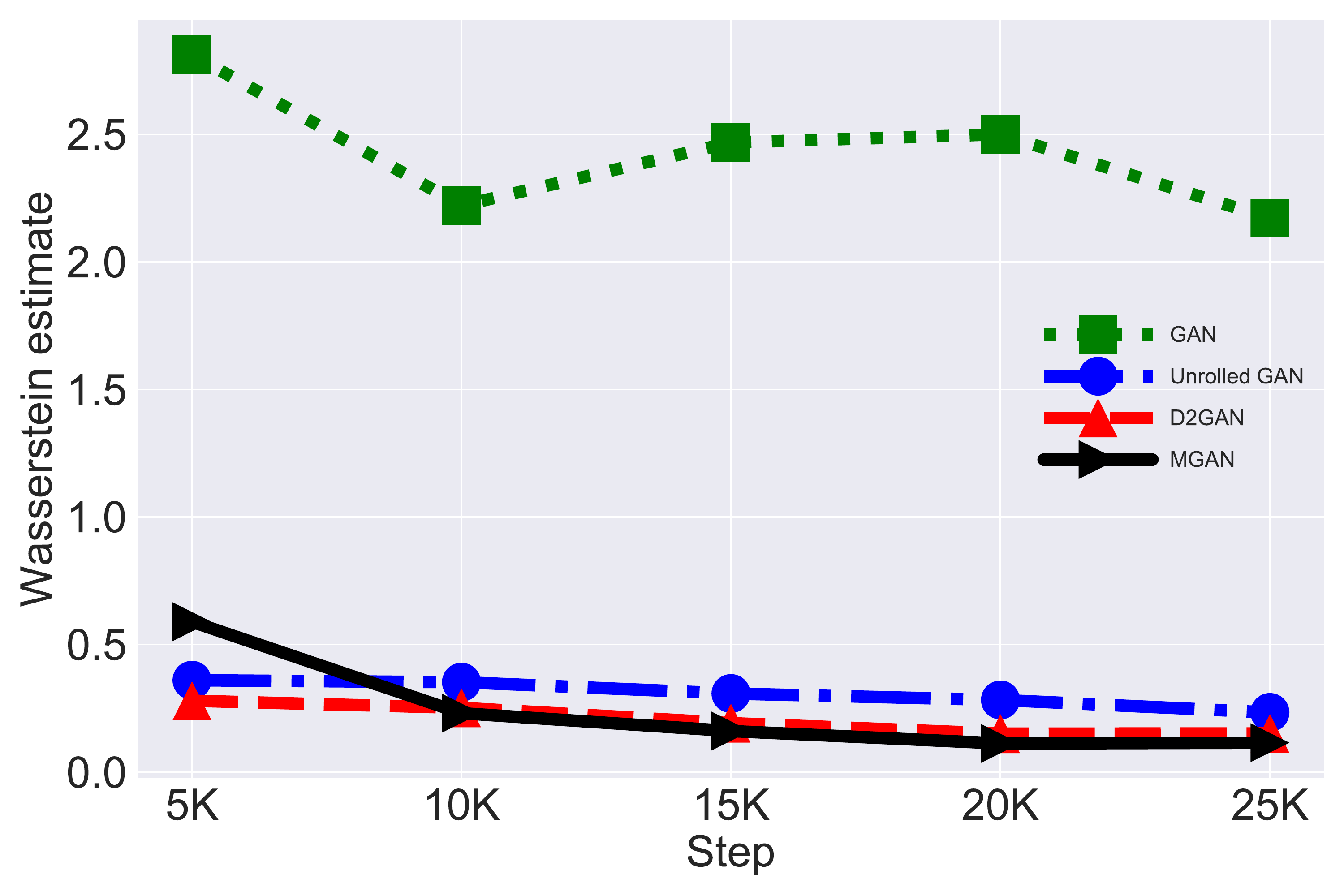}
\par\end{centering}

\noindent \centering{}}
\par\end{center}%
\end{minipage}\hfill{}%
\begin{minipage}[t]{0.67\columnwidth}%
\noindent \begin{center}
\subfloat[Evolution of data (in blue) generated by GAN, UnrolledGAN, D2GAN and
our $\protect\model$ from the top row to the bottom, respectively.
Data sampled from the true mixture of 8 Gaussians are red.\label{fig:2D_distributions}]{\noindent \centering{}\includegraphics[width=1\textwidth]{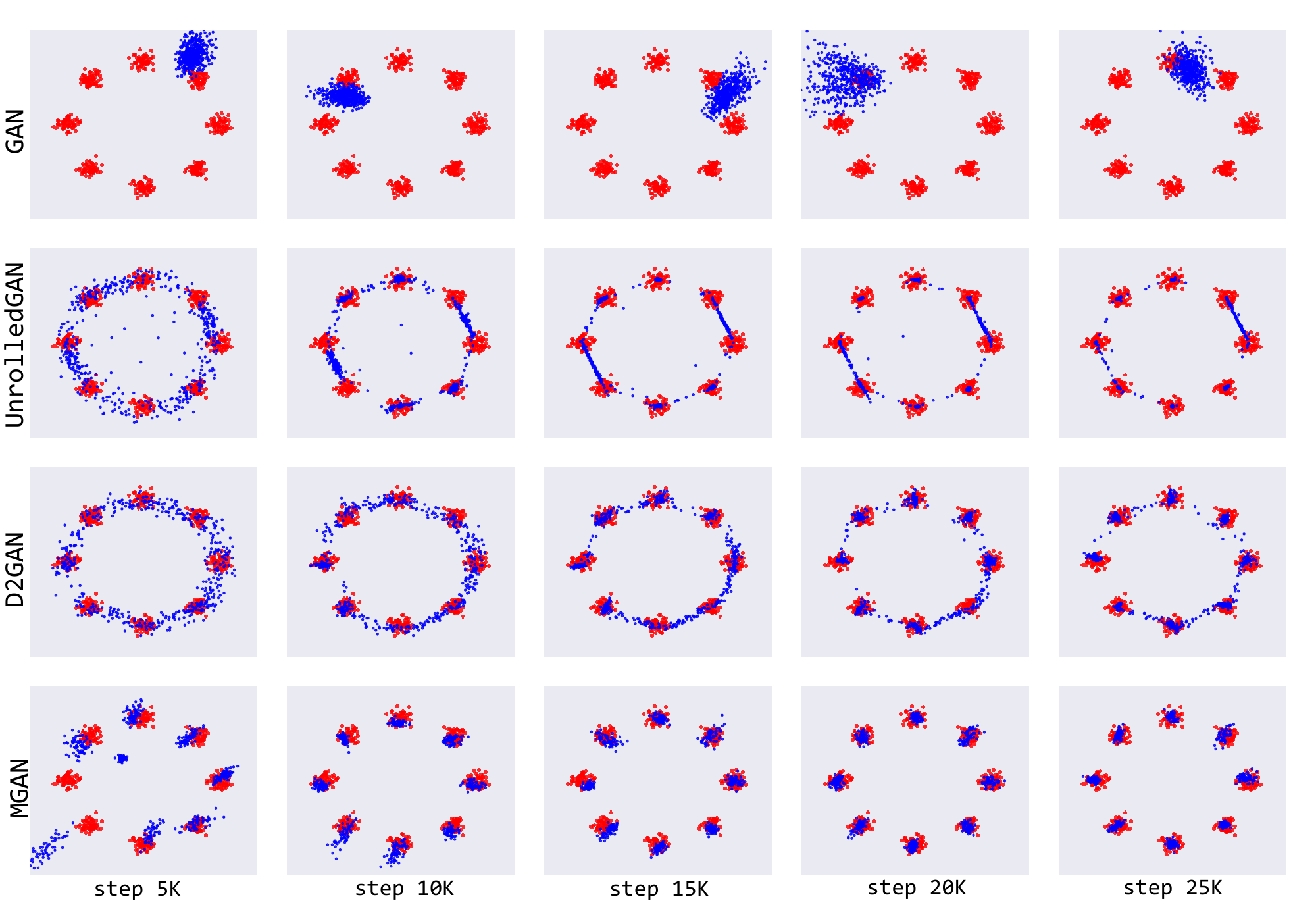}}
\par\end{center}%
\end{minipage}
\par\end{centering}

\noindent \begin{centering}
\vspace{-2mm}
\caption{The comparison of our $\protect\model$ and GAN's variants on 2D synthetic
dataset.}
\vspace{-5mm}

\par\end{centering}

\end{figure}

Lastly, we perform experiments with different numbers of generators.
The MGAN models with 2, 3, 4 and 10 generators all successfully explore
8 modes but the models with more generators generate fewer points
scattered between adjacent modes. We also examine the behavior of
the diversity coefficient $\beta$ by training the 4-generator model
with different values of $\beta$. Without the JSD force ($\beta=0$),
generated samples cluster around one mode. When $\beta=0.25$, the
JSD force is weak and generated data cluster near 4 different modes.
When $\beta=0.75$ or $1.0$, the JSD force is too strong and causes
the generators to collapse, generating 4 increasingly tight clusters.
When $\beta=0.5$, generators successfully cover all of the 8 modes.
Please refer to Appendix~\ref{sec:Synthetic-data-appendix} for experimental
details.

\subsection{Real-world Datasets\label{Real-world-Datasets}}

Next we train our proposed method on real-world databases from natural
scenes to investigate its performance and scalability on much more
challenging large-scale image data.

\paragraph{Datasets.}

We use 3 widely-adopted datasets: CIFAR-10 \citep{krizhevsky2009learning},
STL-10 \citep{coates2011analysis} and ImageNet \citep{russakovsky2015imagenet}.
CIFAR-10 contains 50,000 32$\times$32 training images of 10 classes:
airplane, automobile, bird, cat, deer, dog, frog, horse, ship, and
truck. STL-10, subsampled from ImageNet, is a more diverse dataset
than CIFAR-10, containing about 100,000 96$\times$96 images. ImageNet
(2012 release) presents the largest and most diverse consisting of
over 1.2 million images from 1,000 classes. In order to facilitate
fair comparison with the baselines in \citep{warde2016improving,tu_etal_nips17_d2gan},
we follow the procedure of \citep{krizhevsky2012imagenet} to resize
the STL-10 and ImageNet images down to 48$\times$48 and 32$\times$32,
respectively.

\paragraph{Evaluation protocols.}

For quantitative evaluation, we adopt the Inception score proposed
in \citep{salimans2016improved}, which computes $\exp\left(\mathbb{E}_{\mathbf{x}}\left[KL\left(p\left(y\vert\mathbf{x}\right)\Vert p\left(y\right)\right)\right]\right)$
where $p\left(y\vert\mathbf{x}\right)$ is the conditional label distribution
for the image $\mathbf{x}$ estimated by the reference Inception model
\citep{szegedy2015going}. This metric rewards good and varied samples
and is found to be well-correlated with human judgment \citep{salimans2016improved}.
We use the code provided in \citep{salimans2016improved} to compute
the Inception scores for 10 partitions of 50,000 randomly generated
samples. For qualitative demonstration of image quality obtained by
our proposed model, we show samples generated by the mixture as well
as samples produced by each generator. Samples are randomly drawn
rather than cherry-picked.

\paragraph{Model architectures.}

Our generator and discriminator architectures closely follow the DCGAN's
design \citep{radford2015unsupervised}. The only difference is we
apply batch normalization \citep{ioffe2015batch} to all layers in
the networks except for the output layer. Regarding the classifier,
we empirically find that our proposed $\model$ achieves the best
performance (i.e., fast convergence rate and high inception score)
when the classifier shares parameters of all layers with the discriminator
except for the output layer. The reason is that this parameter sharing
scheme would allow the classifier and discriminator to leverage their
common features and representations learned at every layer, thus helps
to improve and speed up the training progress. When the parameters
are not tied, the model learns slowly and eventually yields lower
performance.

During training we observe that the percentage of active neurons chronically
declined (see Appendix~\ref{sec:Real-world-datasets_appendix}).
One possible cause is that the batch normalization center (offset)
is gradually shifted to the negative range, thus deactivating up to
45\% of ReLU units of the generator networks. Our ad-hoc solution
for this problem is to fix the offset at zero for all layers in the
generator networks. The rationale is that for each feature map, the
ReLU gates will open for about 50\% highest inputs in a minibatch
across all locations and generators, and close for the rest.

We also experiment with other activation functions of generator networks.
First we use Leaky ReLU and obtain similar results with using ReLU.
Then we use MaxOut units \citep{goodfellow2013maxout} and achieves
good Inception scores but generates unrecognizable samples. Finally,
we try SeLU \citep{klambauer2017self} but fail to train our model.

\paragraph{Hyperparameters.}

Three key hyperparameters of our model are: number of generators $\K$,
coefficient $\beta$ controlling the diversity and the minibatch size.
We use a minibatch size of $\left[\nicefrac{128}{\K}\right]$ for
each generator, so that the total number of samples for training all
generators is about 128. We train models with 4 generators and 10
generators corresponding with minibatch sizes of 32 and 12 each, and
find that models with 10 generators performs better. For ImageNet,
we try an additional setting with 32 generators and a minibatch size
of 4 for each. The batch of 4 samples is too small for updating sufficient
statistics of a batch-norm layer, thus we drop batch-norm in the input
layer of each generator. This 32-generator model, however, does not
obtain considerably better results than the 10-generator one. Therefore
in what follows we only report the results of models with 10 generators.
For the diversity coefficient $\beta$, we observe no significant
difference in Inception scores when varying the value of $\beta$
but the quality of generated images declines when $\beta$ is too
low or too high. Generated samples by each generator vary more when
$\beta$ is low, and vary less but become less realistic when $\beta$
is high. We find a reasonable range for $\beta$ to be $\left(0.01,1.0\right)$,
and finally set to 0.01 for CIFAR-10, 0.1 for ImageNet and 1.0 for
STL-10.

\paragraph{Inception results.}

We now report the Inception scores obtained by our $\model$ and baselines
in Tab.~\ref{tab:inception_scores}. It is worthy to note that only
models trained in a completely unsupervised manner \emph{without label}
information are included for fair comparison; and DCGAN's and D2GAN's
results on STL-10 are available only for the models trained on 32$\times$32
resolution. Overall, our proposed model outperforms the baselines
by large margins and achieves state-of-the-art performance on all
datasets. Moreover, we would highlight that our $\model$ obtains
a score of 8.33 on CIFAR-10 that is even better than those of models
trained \emph{with labels} such as 8.09 of Improved GAN \citep{salimans2016improved}
and 8.25 of AC-GAN \citep{odena2016conditional}. In addition, we
train our model on the original 96$\times$96 resolution of STL-10
and achieve a score of 9.79$\pm$0.08. This suggests the $\model$
can be successfully trained on higher resolution images and achieve
the higher Inception score.

\begin{table}[h]
\noindent \centering{}\caption{Inception scores on different datasets. ``--'' denotes unavailable
result.\label{tab:inception_scores}}
\begin{tabular}{lrrr}
\hline 
Model  & CIFAR-10 & STL-10 & ImageNet\tabularnewline
\hline 
Real data  & 11.24$\pm$0.16 & 26.08$\pm$0.26 & 25.78$\pm$0.47\tabularnewline
\rowcolor{even_color}WGAN \citep{arjovsky_etal_arxiv17_wasserstein_gan} & 3.82$\pm$0.06 & --~~~~~~~~~~~ & --~~~~~~~~~~~\tabularnewline
MIX+WGAN \citep{arora2017generalization} & 4.04$\pm$0.07 & --~~~~~~~~~~~ & --~~~~~~~~~~~\tabularnewline
\rowcolor{even_color}Improved-GAN \citep{salimans2016improved} & 4.36$\pm$0.04 & --~~~~~~~~~~~ & --~~~~~~~~~~~\tabularnewline
ALI \citep{dumoulin2016adversarially} & 5.34\textbf{\emph{$\pm$}}0.05 & --~~~~~~~~~~~ & --~~~~~~~~~~~\tabularnewline
\rowcolor{even_color}BEGAN \citep{berthelot2017began} & 5.62~~~~~~~~~~ & --~~~~~~~~~~~ & --~~~~~~~~~~~\tabularnewline
MAGAN \citep{wang2017magan} & 5.67 ~~~~~~~~~ & --~~~~~~~~~~~ & --~~~~~~~~~~~\tabularnewline
\rowcolor{even_color}GMAN \citep{durugkar2016generative} & 6.00$\pm$0.19 & --~~~~~~~~~~~ & --~~~~~~~~~~~\tabularnewline
DCGAN \citep{radford2015unsupervised} & 6.40$\pm$0.05 & 7.54~~~~~~~~~~ & 7.89~~~~~~~~~~~\tabularnewline
\rowcolor{even_color}DFM \citep{warde2016improving} & 7.72$\pm$0.13 & 8.51$\pm$0.13 & 9.18$\pm$0.13\tabularnewline
D2GAN \citep{tu_etal_nips17_d2gan} & 7.15$\pm$0.07 & 7.98~~~~~~~~~~ & 8.25~~~~~~~~~~~\tabularnewline
\rowcolor{even_color}\textbf{$\boldsymbol{\model}$} & \textbf{8.33$\pm$0.10} & \textbf{9.22$\pm$0.11} & \textbf{9.32$\pm$0.10}\tabularnewline
\hline 
\end{tabular}
\end{table}

\paragraph{Image generation.}

Next we present samples randomly generated by our proposed model trained
on the 3 datasets for qualitative assessment. Fig.~\ref{fig:exp_cifar10_samples}
shows CIFAR-10 32$\times$32 images containing a wide range of objects
in such as airplanes, cars, trucks, ships, birds, horses or dogs.
Similarly, STL-10 48$\times$48 generated images in Fig.~\ref{fig:exp_stl10_48x48_samples}
include cars, ships, airplanes and many types of animals, but with
wider range of different themes such as sky, underwater, mountain
and forest. Images generated for ImageNet 32$\times$32 are diverse
with some recognizable objects such as lady, old man, birds, human
eye, living room, hat, slippers, to name a few. Fig.~\ref{fig:STL96_good}
shows several cherry-picked STL-10 96$\times$96 images, which demonstrate
that the $\model$ is capable of generating visually appealing images
with complicated details. However, many samples are still incomplete
and unrealistic as shown in Fig.~\ref{fig:STL96_bad}, leaving plenty
of room for improvement.

\begin{figure}[h]
\noindent \begin{centering}
\subfloat[CIFAR-10 32$\times$32.\label{fig:exp_cifar10_samples}]{\noindent \begin{centering}
\includegraphics[width=0.33\textwidth]{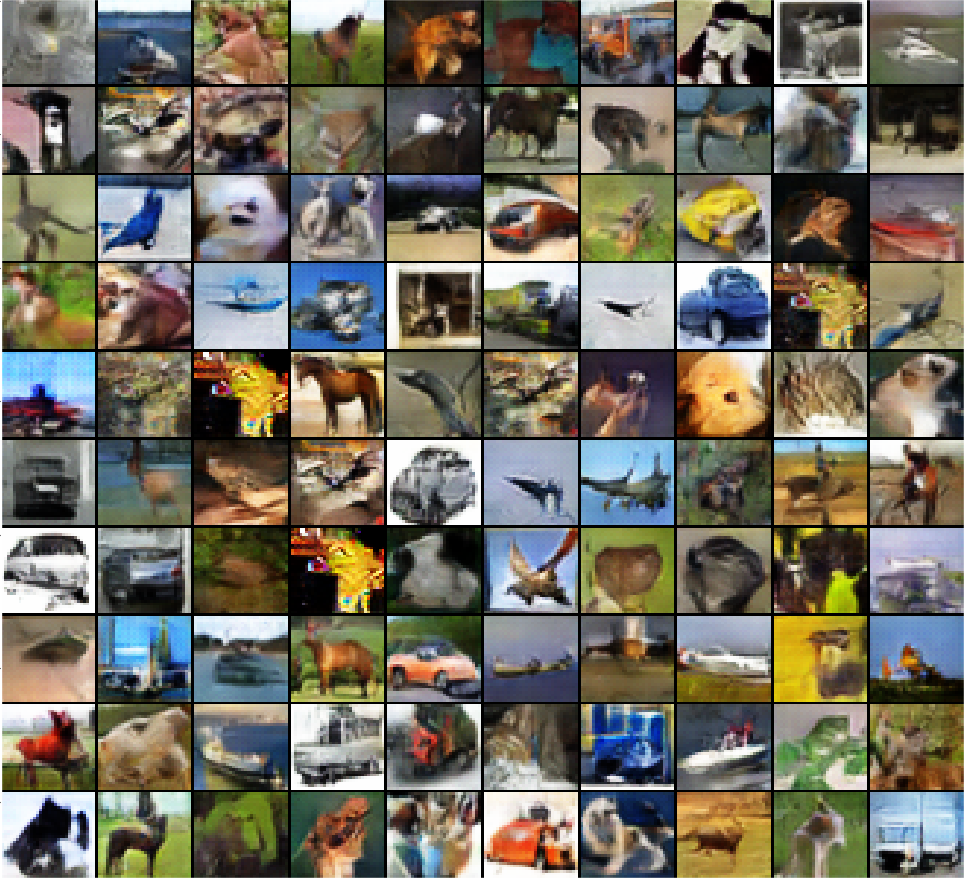}
\par\end{centering}

\noindent \centering{}}\subfloat[STL-10 48$\times$48.\label{fig:exp_stl10_48x48_samples}]{\noindent \begin{centering}
\includegraphics[width=0.3\textwidth]{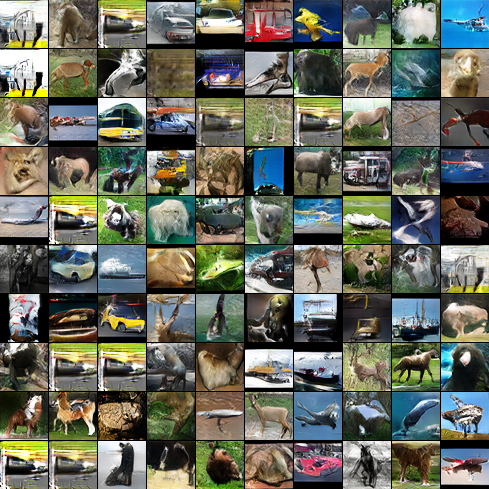}
\par\end{centering}

\noindent \centering{}}\subfloat[ImageNet 32$\times$32.\label{fig:exp_imagenet_samples}]{\noindent \begin{centering}
\includegraphics[width=0.3\textwidth]{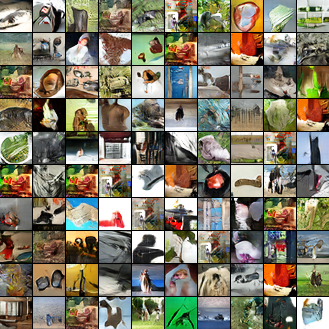}
\par\end{centering}

\noindent \centering{}}
\par\end{centering}

\noindent \centering{}\caption{Images generated by our proposed $\protect\model$ trained on natural
image datasets. Due to the space limit, please refer to the appendix
for larger plots.\label{fig:32x32_images}}
\end{figure}

\begin{figure}[h]
\noindent \begin{centering}
\subfloat[Cherry-picked samples.\label{fig:STL96_good}]{\noindent \centering{}\includegraphics[width=0.45\textwidth]{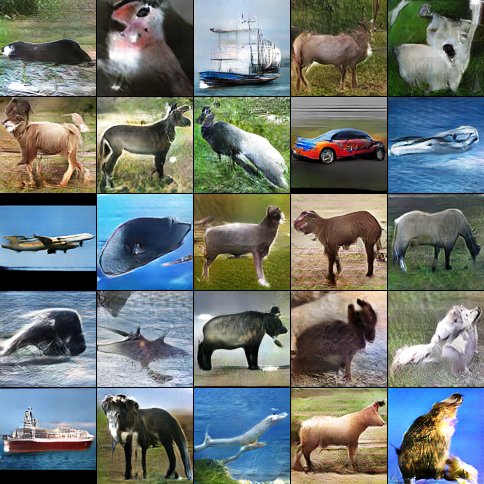}}\hfill{}\subfloat[Incomplete, unrealistic samples.\label{fig:STL96_bad}]{\noindent \begin{centering}
\includegraphics[width=0.45\textwidth]{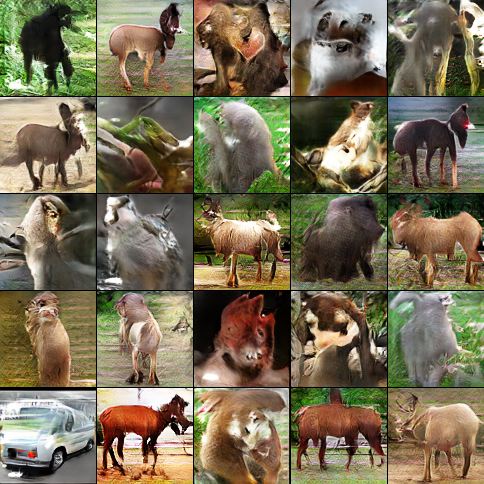}
\par\end{centering}

\noindent \centering{}}
\par\end{centering}

\noindent \centering{}\caption{Images generated by our $\protect\model$ trained on the original
96$\times$96 STL10 dataset.\label{STL96}}
\end{figure}

Finally, we investigate samples generated by each generator as well
as the evolution of these samples through numbers of training epochs.
Fig.~\ref{fig:cifar_samples_by_gens} shows images generated by each
of the 10 generators in our $\model$ trained on CIFAR-10 at epoch
20, 50, and 250 of training. Samples in each row correspond to a different
generator. Generators start to specialize in generating different
types of objects as early as epoch 20 and become more and more consistent:
generator 2 and 3 in flying objects (birds and airplanes), generator
4 in full pictures of cats and dogs, generator 5 in portraits of cats
and dogs, generator 8 in ships, generator 9 in car and trucks, and
generator 10 in horses. Generator 6 seems to generate images of frog
or animals in a bush. Generator 7, however, collapses in epoch 250.
One possible explanation for this behavior is that images of different
object classes tend to have different themes. Lastly, \citet{wang2016ensembles}
noticed one of the causes for non-convergence in GANs is that the
generators and discriminators constantly vary; the generators at two
consecutive epochs of training generate significantly different images.
This experiment demonstrates the effect of the JSD force in preventing
generators from moving around the data space.

\begin{figure}[h]
\noindent \begin{centering}
\subfloat[Epoch \#20.]{\noindent \centering{}\includegraphics[width=0.3\textwidth]{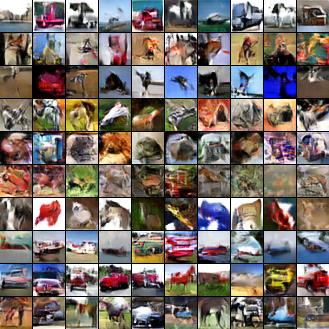}}\enskip{}\subfloat[Epoch \#50.]{\noindent \centering{}\includegraphics[width=0.3\textwidth]{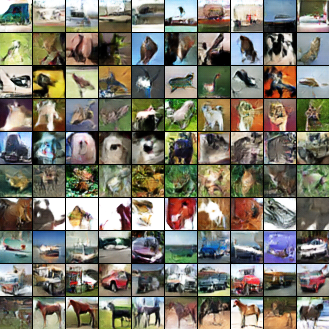}}\enskip{}\subfloat[Epoch \#250.]{\noindent \centering{}\includegraphics[width=0.3\textwidth]{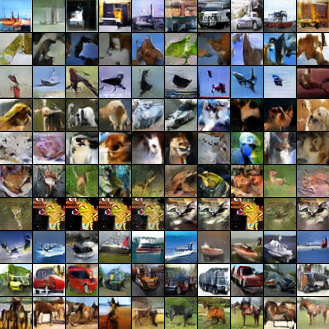}}
\par\end{centering}

\noindent \centering{}\caption{Images generated by our $\protect\model$ trained on CIFAR10 at different
epochs. Samples in each row from the top to the bottom correspond
to a different generator.\label{fig:cifar_samples_by_gens}}
\end{figure}

\section{Conclusion}

We have presented a novel adversarial model to address the mode collapse
in GANs. Our idea is to approximate data distribution using a mixture
of multiple distributions wherein each distribution captures a subset
of data modes separately from those of others. To achieve this goal,
we propose a minimax game of one discriminator, one classifier and
many generators to formulate an optimization problem that minimizes
the JSD between $P_{data}$ and $P_{model}$, i.e., a mixture of distributions
induced by the generators, whilst maximizes JSD among such generator
distributions. This helps our model generate diverse images to better
cover data modes, thus effectively avoids mode collapse. We term our
proposed model \emph{Mixture Generative Adversarial Network} ($\model$).

The $\model$ can be efficiently trained by sharing parameters between
its discriminator and classifier, and among its generators, thus our
model is scalable to be evaluated on real-world large-scale datasets.
Comprehensive experiments on synthetic 2D data, CIFAR-10, STL-10 and
ImageNet databases demonstrate the following capabilities of our model:
(i) achieving state-of-the-art Inception scores; (ii) generating diverse
and appealing recognizable objects at different resolutions; and (iv)
specializing in capturing different types of objects by the generators.

\bibliographystyle{iclr18}

\appendix

\section{Appendix: Framework\label{sec:appendix_framework}}

In our proposed method, generators $G_{1}$, $G_{2}$, ... $G_{\K}$
are deep convolutional neural networks parameterized by $\btheta_{G}$.
These networks share parameters in all layers except for the input
layers. The input layer for generator $G_{k}$ is parameterized by
the mapping $f_{\mathbf{\btheta}_{G},k}\left(\mathbf{z}\right)$ that
maps the sampled noise $\mathbf{z}$ to the first hidden layer activation
$\mathbf{h}$. The shared layers are parameterized by the mapping
$g_{\mathbf{\btheta}_{G}}\left(\mathbf{h}\right)$ that maps the first
hidden layer to the generated data. The pseudo-code of sampling from
the mixture is described in Alg.~\ref{algo:sampling_from_mixture}.
Classifier $C$ and classifier $D$ are also deep convolutional neural
networks that are both parameterized by $\btheta_{CD}$. They share
parameters in all layers except for the last layer. The pseudo-code
of alternatively learning $\btheta_{G}$ and $\btheta_{CD}$ using
stochastic gradient descend is described in Alg.~\ref{algo:training}.

\begin{algorithm}
\begin{algor}[1]
\item [{{*}}] Sample noise $\mathbf{z}$ from the prior $P_{\mathbf{z}}$.
\item [{{*}}] Sample a generator index $u$ from $\textnormal{Mult}\left(\pi_{1},\pi_{2},...,\pi_{\K}\right)$
with predefined mixing probability $\bpi=\left(\pi_{1},\pi_{2},...,\pi_{\K}\right)$.
\item [{{*}}] $\mathbf{h}=f_{\mathbf{\btheta}_{G},u}\left(\mathbf{z}\right)$
\item [{{*}}] $\mathbf{x}=g_{\mathbf{\btheta}_{G}}\left(\mathbf{h}\right)$
\item [{{*}}] Return generated data $\mathbf{x}$ and the index $u$.
\end{algor}
\caption{Sampling from $\protect\model$'s mixture of generators.\label{algo:sampling_from_mixture}}
\end{algorithm}

\begin{algorithm}
\begin{algor}[1]
\item [{for}] number of training iterations
\item [{{*}}] Sample a minibatch of $\M$ data points $\left(\mathbf{x}^{\left(1\right)},\mathbf{x}^{\left(2\right)},...,\mathbf{x}^{\left(\M\right)}\right)$
from the data distribution $P_{data}$.
\item [{{*}}] Sample a minibatch of $\N$ generated data points $\left(\mathbf{x^{\prime}}^{\left(1\right)},\mathbf{x^{\prime}}^{\left(2\right)},...,\mathbf{x^{\prime}}^{\left(\N\right)}\right)$
and $\N$ indices $\left(u_{1},u_{2},...,u_{\N}\right)$ from the
current mixture.
\item [{{*}}] $\mathcal{L}_{C}=-\frac{1}{\N}\sum_{n=1}^{\N}\log C_{u_{n}}\left(\mathbf{x^{\prime}}^{\left(n\right)}\right)$
\item [{{*}}] $\mathcal{L}_{D}=-\frac{1}{\M}\sum_{m=1}^{\M}\log D\left(\mathbf{x}^{\left(m\right)}\right)-\frac{1}{\N}\sum_{n=1}^{\N}\log\left[1-D\left(\mathbf{x^{\prime}}^{\left(n\right)}\right)\right]$
\item [{{*}}] Update classifier $C$ and discriminator $D$ by descending
along their gradient: $\nabla_{\mathbf{\btheta}_{CD}}\left(\mathcal{L}_{C}+\mathcal{L}_{D}\right)$.
\item [{{*}}] Sample a minibatch of $\N$ generated data points $\left(\mathbf{x^{\prime}}^{\left(1\right)},\mathbf{x^{\prime}}^{\left(2\right)},...,\mathbf{x^{\prime}}^{\left(\N\right)}\right)$
and $\N$ indices $\left(u_{1},u_{2},...,u_{\N}\right)$ from the
current mixture.
\item [{{*}}] $\mathcal{L}_{G}=-\frac{1}{\N}\sum_{n=1}^{\N}\log D\left(\mathbf{x^{\prime}}^{\left(n\right)}\right)-\frac{\beta}{\N}\sum_{n=1}^{\N}\log C_{u_{n}}\left(\mathbf{x^{\prime}}^{\left(n\right)}\right)$
\item [{{*}}] Update the mixture of generators $G$ by ascending along
its gradient: $\nabla_{\mathbf{\btheta}_{G}}\mathcal{L_{G}}$.
\item [{endfor}]~
\end{algor}
\caption{Alternative training of $\protect\model$ using stochastic gradient
descent.\label{algo:training}}

\end{algorithm}

\section{Appendix: Proofs for Section~\ref{theoretical_analysis}\label{sec:appendix_proofs}}

\paragraph{Proposition 1 (Prop.~\ref{Proposition1} restated).}

\emph{For fixed generators $G_{1}$, $G_{2}$, ..., $G_{\K}$ and
mixture weights $\pi_{1},\pi_{2},...,\pi_{\K}$, the optimal classifier
$C^{*}=C_{1:\K}^{*}$ and discriminator $D^{*}$ for $\mathcal{J}\left(G,C,D\right)$
are:
\begin{align*}
C_{k}^{*}\left(\mathbf{x}\right) & =\frac{\pi_{k}p_{G_{k}}\left(\mathbf{x}\right)}{\sum_{j=1}^{K}\pi_{j}p_{G_{j}}\left(\mathbf{x}\right)}\\
D^{*}\left(\mathbf{x}\right) & =\frac{p_{data}\left(\mathbf{x}\right)}{p_{data}\left(\mathbf{x}\right)+p_{model}\left(\mathbf{x}\right)}
\end{align*}
}
\begin{proof}
The optimal $D^{*}$ was proved in Prop.~1 in \citep{goodfellow2016nips}.
This section shows a similar proof for the optimal $C^{*}$. Assuming
that $C^{*}$ can be optimized in the functional space, we can calculate
the functional derivatives of $\mathcal{J}\left(G,C,D\right)$with
respect to each $C_{k}\left(\mathbf{x}\right)$ for $k\in\left\{ 2,...,\K\right\} $
and set them equal to zero:
\begin{align}
\frac{\delta\mathcal{J}}{\delta C_{k}\left(\mathbf{x}\right)} & =-\beta\frac{\delta}{\delta C_{k}\left(\mathbf{x}\right)}\int\left(\pi_{1}p_{G_{1}}\left(\mathbf{x}\right)\log\left(1-\sum_{k=2}^{\K}C_{k}\left(\mathbf{x}\right)\right)+\sum_{k=2}^{\K}\pi_{k}p_{G_{k}}\left(\mathbf{x}\right)\log C_{k}\left(\mathbf{x}\right)\right)\d x\nonumber \\
 & =-\beta\left(\frac{\pi_{k}p_{G_{k}}\left(\mathbf{x}\right)}{C_{k}\left(\mathbf{x}\right)}-\frac{\pi_{1}p_{G_{1}}\left(\mathbf{x}\right)}{C_{1}\left(\mathbf{x}\right)}\right)\label{C_functional_derivative}
\end{align}
Setting $\frac{\delta\mathcal{J}\left(G,C,D\right)}{\delta C_{k}\left(\mathbf{x}\right)}$
to 0 for $k\in\left\{ 2,...,\K\right\} $, we get:
\begin{equation}
\frac{\pi_{1}p_{G_{1}}\left(\mathbf{x}\right)}{C_{1}^{*}\left(\mathbf{x}\right)}=\frac{\pi_{2}p_{G_{2}}\left(\mathbf{x}\right)}{C_{2}^{*}\left(\mathbf{x}\right)}=...=\frac{\pi_{K}p_{G_{K}}\left(\mathbf{x}\right)}{C_{\K}^{*}\left(\mathbf{x}\right)}\label{C_zero_gradient}
\end{equation}
$C_{k}^{*}\left(\mathbf{x}\right)=\frac{\pi_{k}p_{G_{k}}\left(\mathbf{x}\right)}{\sum_{j=1}^{\K}\pi_{j}p_{G_{j}}\left(\mathbf{x}\right)}$
results from Eq.~(\ref{C_zero_gradient}) due to the fact that $\sum_{k=1}^{\K}C_{k}^{*}\left(\mathbf{x}\right)=1$.
\end{proof}

\paragraph{Reformulation of $\mathcal{L}\left(G_{1:\protect\K}\right)$.}

Replacing the optimal $C^{*}$ and $D^{*}$ into Eq.~(\ref{game_formula}),
we can reformulate the objective function for the generator as follows:
\begin{alignat}{1}
\mathcal{\mathcal{L}}\left(G_{1:\K}\right) & =\mathcal{J}\left(G,C^{*},D^{*}\right)\nonumber \\
 & =\mathbb{E}_{\bx\sim P_{data}}\left[\log\frac{p_{data}\left(\mathbf{x}\right)}{p_{data}\left(\mathbf{x}\right)+p_{model}\left(\mathbf{x}\right)}\right]+\mathbb{E}_{\bx\sim P_{model}}\left[\log\frac{p_{model}\left(\mathbf{x}\right)}{p_{data}\left(\mathbf{x}\right)+p_{model}\left(\mathbf{x}\right)}\right]\nonumber \\
 & \,\,\,\,\,\,-\beta\left\{ \sum_{k=1}^{\K}\pi_{k}\mathbb{E}_{\bx\sim P_{G_{k}}}\left[\log\frac{\pi_{k}p_{G_{k}}\left(\mathbf{x}\right)}{\sum_{j=1}^{\K}\pi_{j}p_{G_{j}}\left(\mathbf{x}\right)}\right]\right\} \label{Lg-1}
\end{alignat}
The sum of the first two terms in Eq.~(\ref{Lg-1}) was shown in
\citep{goodfellow2014generative} to be $2\cdot\textnormal{JSD}\left(P_{data}\Vert P_{model}\right)-\log4$.
The last term $\beta\{*\}$ of Eq.~(\ref{Lg-1}) is related to the
JSD for the $\K$ distributions:
\begin{align}
* & =\sum_{k=1}^{\K}\pi_{k}\mathbb{E}_{\bx\sim P_{G_{k}}}\left[\log\frac{\pi_{k}p_{G_{k}}\left(\mathbf{x}\right)}{\sum_{j=1}^{\K}\pi_{j}p_{G_{j}}\left(\mathbf{x}\right)}\right]\nonumber \\
 & =\sum_{k=1}^{\K}\pi_{k}\mathbb{E}_{\bx\sim P_{G_{k}}}\left[\log p_{G_{k}}\left(\mathbf{x}\right)\right]-\sum_{k=1}^{\K}\pi_{k}\mathbb{E}_{\bx\sim P_{G_{k}}}\left[\log\sum_{j=1}^{\K}\pi_{j}p_{G_{j}}\left(\mathbf{x}\right)\right]+\sum_{k=1}^{\K}\pi_{k}\log\pi_{k}\nonumber \\
 & =-\sum_{k=1}^{\K}\pi_{k}\mathbb{H}\left(p_{G_{k}}\right)+\mathbb{H}\left(\sum_{j=1}^{\K}\pi_{j}p_{G_{j}}\left(\mathbf{x}\right)\right)+\sum_{k=1}^{\K}\pi_{k}\log\pi_{k}\nonumber \\
 & =\textnormal{JSD}_{\bpi}\left(P_{G_{1}},P_{G_{2}},...,P_{G_{\K}}\right)+\sum_{k=1}^{\K}\pi_{k}\log\pi_{k}\label{Lg_JSD}
\end{align}
where $\mathbb{H}\left(P\right)$ is the Shannon entropy for distribution
$P$. Thus, $\mathcal{\mathcal{L}}\left(G_{1:\K}\right)$ can be rewritten
as:
\begin{align*}
\mathcal{\mathcal{L}}\left(G_{1:\K}\right) & =-\log4+2\cdot\textnormal{JSD}\left(P_{data}\Vert P_{model}\right)-\beta\cdot\textnormal{JSD}_{\bpi}\left(P_{G_{1}},P_{G_{2}},...,P_{G_{\K}}\right)-\beta\sum_{k=1}^{\K}\pi_{k}\log\pi_{k}
\end{align*}

\paragraph{Theorem 3 (Thm.~\ref{theorem_global_minimum} restated).}

\emph{If the data distribution has the form: $p_{data}\left(\bx\right)=\sum_{k=1}^{K}\pi_{k}q_{k}\left(\bx\right)$
where the mixture components $q_{k}\left(\bx\right)$(s) are well-separated,
the minimax problem in Eq.~(\ref{game_formula}) or the optimization
problem in Eq. (\ref{eq:optimal}) has the following solution:
\[
p_{G_{k}^{*}}\left(\bx\right)=q_{k}\left(\bx\right),\,\forall k=1,\dots,\K\,\,\text{and}\,\,p_{model}\left(\bx\right)=\sum_{k=1}^{\K}\pi_{k}q_{k}\left(\bx\right)=p_{data}\left(\bx\right)
\]
, and the corresponding objective value of the optimization problem
in Eq. (\ref{eq:optimal}) is $-\beta\mathbb{H}\left(\bpi\right)=-\beta\sum_{k=1}^{K}\pi_{k}\log\frac{1}{\pi_{k}}$.}
\begin{proof}
We first recap the optimization problem for finding the optimal $G^{*}$:
\[
\min_{G}\left(2\cdot\textnormal{JSD}\left(P_{data}\Vert P_{model}\right)-\beta\cdot\textnormal{JSD}_{\pi}\left(P_{G_{1}},P_{G_{2}},...,P_{G_{\K}}\right)\right)
\]
The JSD in Eq.~(\ref{Lg_JSD}) is given by:
\begin{align}
\textnormal{JSD}_{\bpi}\left(P_{G_{1}},P_{G_{2}},...,P_{G_{\K}}\right) & =\sum_{k=1}^{\K}\pi_{k}\mathbb{E}_{\bx\sim P_{G_{k}}}\left[\log\frac{\pi_{k}p_{G_{k}}\left(\mathbf{x}\right)}{\sum_{j=1}^{\K}\pi_{j}p_{G_{j}}\left(\mathbf{x}\right)}\right]-\sum_{k=1}^{\K}\pi_{k}\log\pi_{k}\label{JSD}
\end{align}
The $i$-th expectation in Eq.~(\ref{JSD}) can be derived as follows:
\begin{align*}
\mathbb{E}_{\bx\sim P_{G_{k}}}\left[\log\frac{\pi_{k}p_{G_{k}}\left(\mathbf{x}\right)}{\sum_{j=1}^{\K}\pi_{j}p_{G_{j}}\left(\mathbf{x}\right)}\right] & \leq\mathbb{E}_{\bx\sim P_{G_{k}}}\left[\log1\right]\leq0
\end{align*}
and the equality occurs if $\frac{\pi_{k}p_{G_{k}}\left(\mathbf{x}\right)}{\sum_{j=1}^{\K}\pi_{j}p_{G_{j}}\left(\mathbf{x}\right)}=1$
almost everywhere or equivalently for almost every $\bx$ except for
those in a zero measure set, we have:
\begin{equation}
p_{G_{k}}\left(\bx\right)>0\Longrightarrow p_{G_{j}}\left(\bx\right)=0,\,\forall j\neq k\label{eq:well-separated}
\end{equation}
Therefore, we obtain the following inequality:
\[
\textnormal{JSD}_{\bpi}\left(P_{G_{1}},P_{G_{2}},...,P_{G_{\K}}\right)\leq-\sum_{k=1}^{\K}\pi_{k}\log\pi_{k}=\sum_{k=1}^{\K}\pi_{k}\log\frac{1}{\pi_{k}}=\mathbb{H}\left(\bpi\right)
\]
and the equality occurs if for almost every $\bx$ except for those
in a zero measure set, we have:
\[
\forall k:\,p_{G_{k}}\left(\bx\right)>0\Longrightarrow p_{G_{j}}\left(\bx\right)=0,\,\forall j\neq k
\]
It follows that
\[
2\cdot\textnormal{JSD}\left(P_{data}\Vert P_{model}\right)-\beta\cdot\textnormal{JSD}_{\bpi}\left(P_{G_{1}},P_{G_{2}},...,P_{G_{\K}}\right)\geq0-\beta\mathbb{H}\left(\bpi\right)=-\beta\mathbb{H}\left(\bpi\right)
\]
and we peak the minimum if $p_{G_{k}}=q_{k},\,\forall k$ since this
solution satisfies both
\[
p_{model}\left(\bx\right)=\sum_{k=1}^{\K}\pi_{k}q_{k}\left(\bx\right)=p_{data}\left(\bx\right)
\]
and the conditions depicted in Eq.~(\ref{eq:well-separated}). That
concludes our proof.\end{proof}

\section{Appendix: Additional Experiments\label{sec:appendix_exps}}

\subsection{Synthetic 2D Gaussian data\label{sec:Synthetic-data-appendix}}

The true data is sampled from a 2D mixture of 8 Gaussian distributions
with a covariance matrix 0.02$\bI$ and means arranged in a circle
of zero centroid and radius 2.0. We use a simple architecture of 8
generators with two fully connected hidden layers and a classifier
and a discriminator with one shared hidden layer. All hidden layers
contain the same number of 128 ReLU units. The input layer of generators
contains 256 noise units sampled from isotropic multivariate Gaussian
distribution $\mathcal{N}\left(0,\bI\right)$. We do not use batch
normalization in any layer. We refer to Tab.~\ref{tab:synthetic2d_exp}
for more specifications of the network and hyperparameters. ``Shared''
is short for parameter sharing among generators or between the classifier
and the discriminator. Feature maps of 8/1 in the last layer for $C$
and $D$ means that two separate fully connected layers are applied
to the penultimate layer, one for $C$ that outputs $8$ logits and
another for $D$ that outputs $1$ logit.

\begin{table}[h]
\noindent \begin{centering}
\caption{Network architecture and hyperparameters for 2D Gaussian data.\label{tab:synthetic2d_exp}}

\par\end{centering}

\noindent \centering{}%
\begin{tabular}{rllc}
\hline 
Operation & Feature maps & Nonlinearity & Shared?\tabularnewline
\hline 
$G\left(\mathbf{z}\right):\mathbf{z}\sim\mathcal{N}\left(0,\mathbf{I}\right)$ & 256 &  & \tabularnewline
Fully connected & 128 & ReLU & $\times$\tabularnewline
Fully connected & 128 & ReLU & $\surd$\tabularnewline
Fully connected & 2 & Linear & $\surd$\tabularnewline
\hline 
$C\left(\mathbf{x}\right),D\left(\mathbf{x}\right)$ & 2 &  & \tabularnewline
Fully connected & 128 & Leaky ReLU & $\surd$\tabularnewline
Fully connected & 8/1 & Softmax/Sigmoid & $\times$\tabularnewline
\hline 
Number of generators & \multicolumn{3}{l}{8}\tabularnewline
Batch size for real data & \multicolumn{3}{l}{512}\tabularnewline
Batch size for each generator & \multicolumn{3}{l}{128}\tabularnewline
Number of iterations & \multicolumn{3}{l}{25,000}\tabularnewline
Leaky ReLU slope & \multicolumn{3}{l}{0.2}\tabularnewline
Learning rate & \multicolumn{3}{l}{0.0002}\tabularnewline
Regularization constants & \multicolumn{3}{l}{$\beta=0.125$}\tabularnewline
Optimizer & \multicolumn{3}{l}{Adam$\left(\beta_{1}=0.5,\beta_{2}=0.999\right)$}\tabularnewline
Weight, bias initialization & \multicolumn{3}{l}{$\mathcal{N}\left(\mu=0,\sigma=0.02\bI\right)$, $0$}\tabularnewline
\hline 
\end{tabular}
\end{table}

\paragraph{The effect of the number of generators on generated samples.}

Fig.~\ref{fig:different_num_gens} shows samples produced by $\model$s
with different numbers of generators trained on synthetic data for
25,000 epochs. The model with 1 generator behaves similarly to the
standard GAN as expected. The models with 2, 3 and 4 generators all
successfully cover 8 modes, but the ones with more generators draw
fewer points scattered between adjacent modes. Finally, the model
with 10 generators also covers 8 modes wherein 2 generators share
one mode and one generator hovering around another mode.

\begin{figure}[h]
\noindent \begin{centering}
\subfloat[1 generator.]{\noindent \begin{centering}
\includegraphics[width=0.17\textwidth]{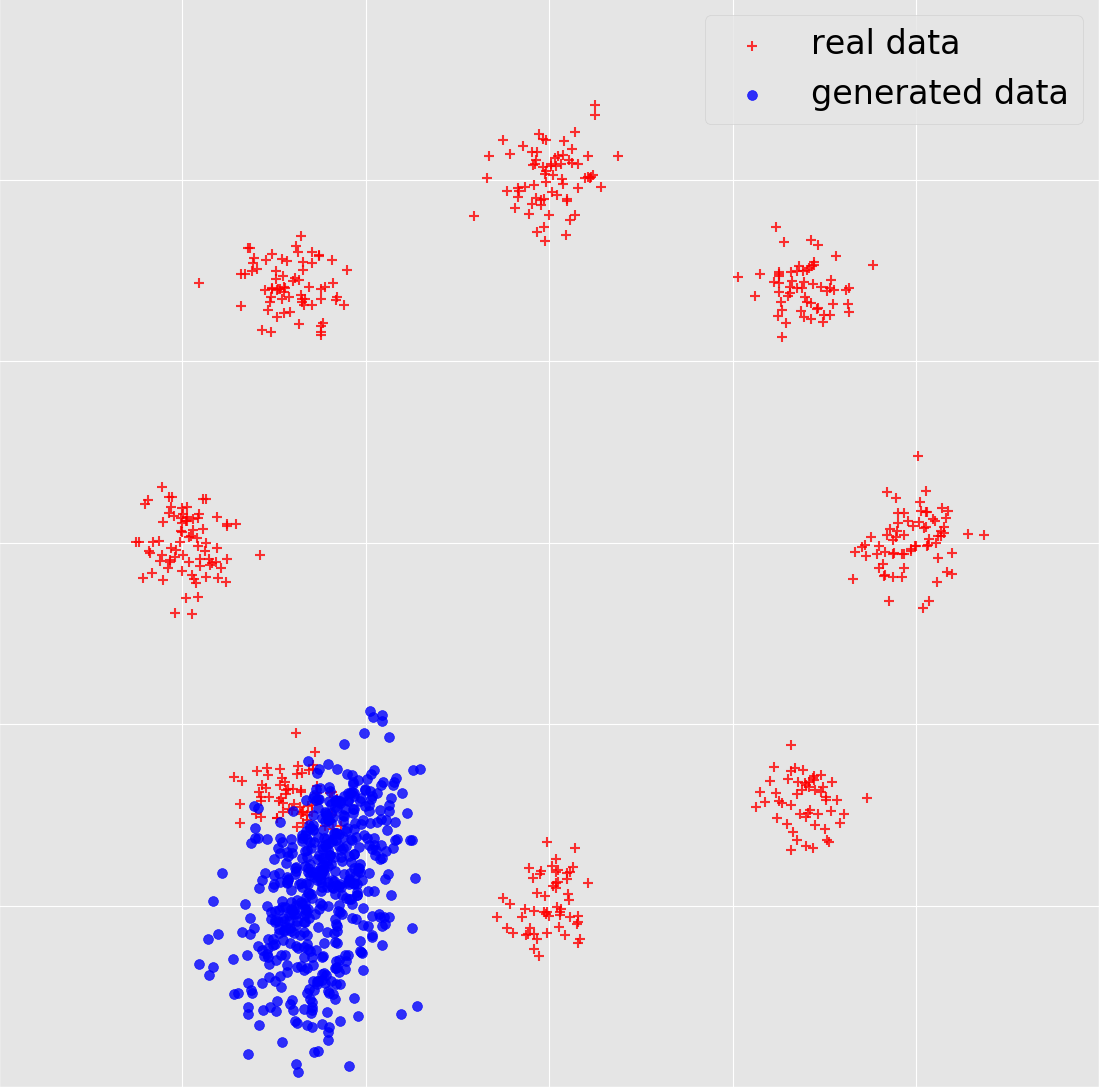} 
\par\end{centering}

\noindent \centering{}}\hfill{}\subfloat[2 generators.]{\noindent \begin{centering}
\includegraphics[width=0.17\textwidth]{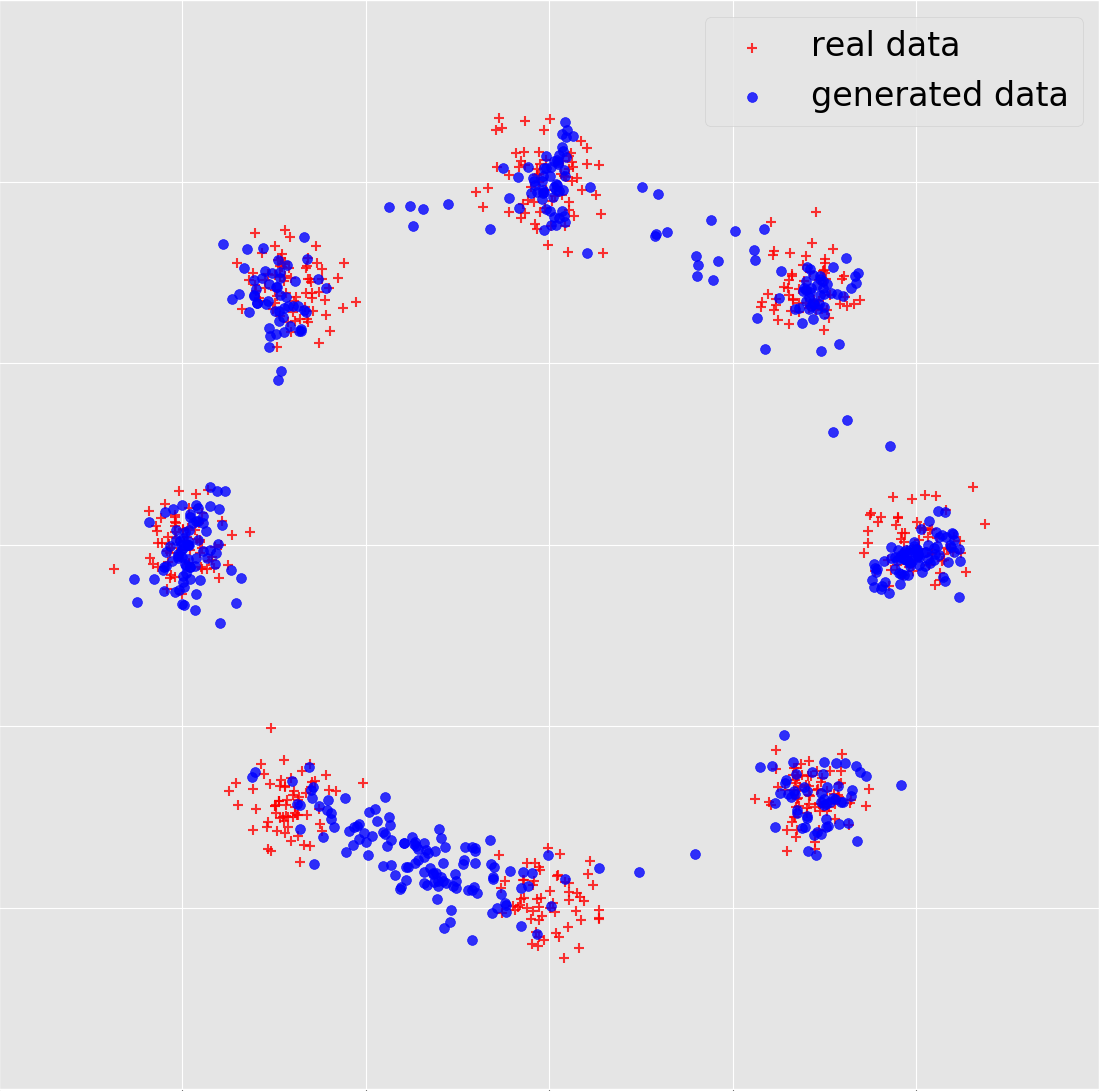} 
\par\end{centering}

\noindent \centering{}}\hfill{} \subfloat[3 generators.]{\noindent \begin{centering}
\includegraphics[width=0.17\textwidth]{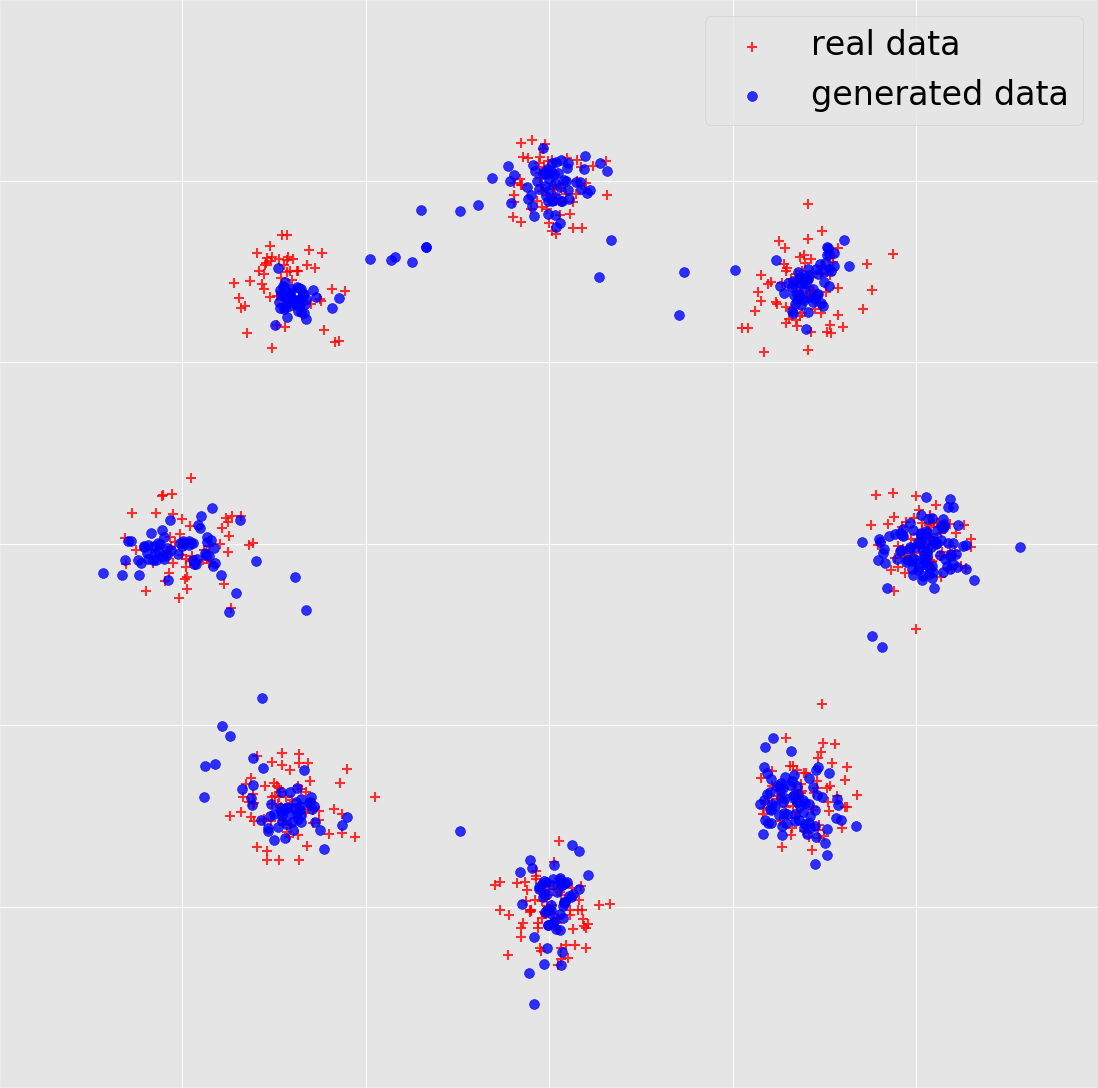}
\par\end{centering}

\noindent \centering{}}\hfill{} \subfloat[4 generators.]{\noindent \begin{centering}
\includegraphics[width=0.17\textwidth]{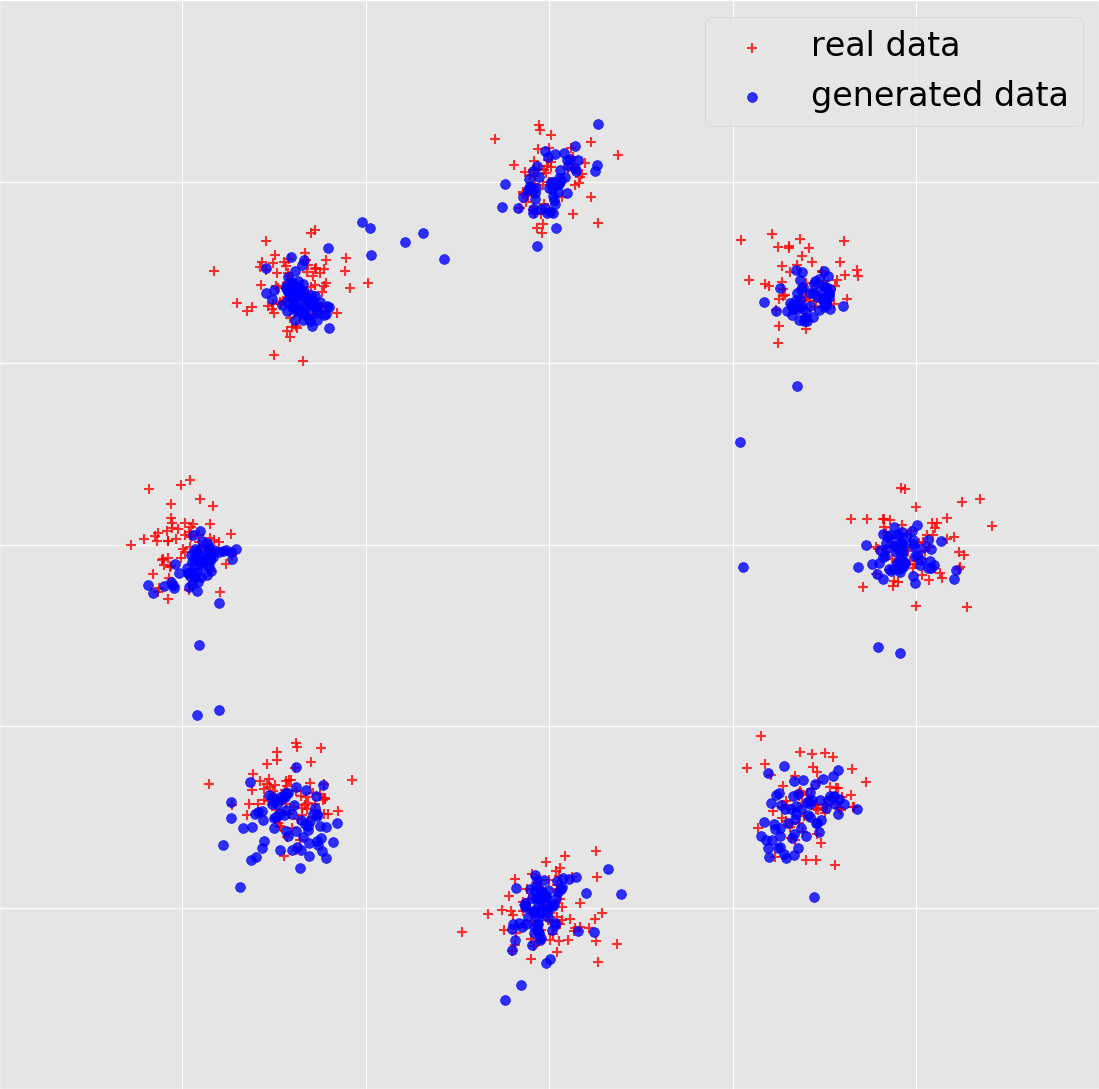}
\par\end{centering}

\noindent \centering{}}\hfill{}\subfloat[10 generators.]{\noindent \begin{centering}
\includegraphics[width=0.17\textwidth]{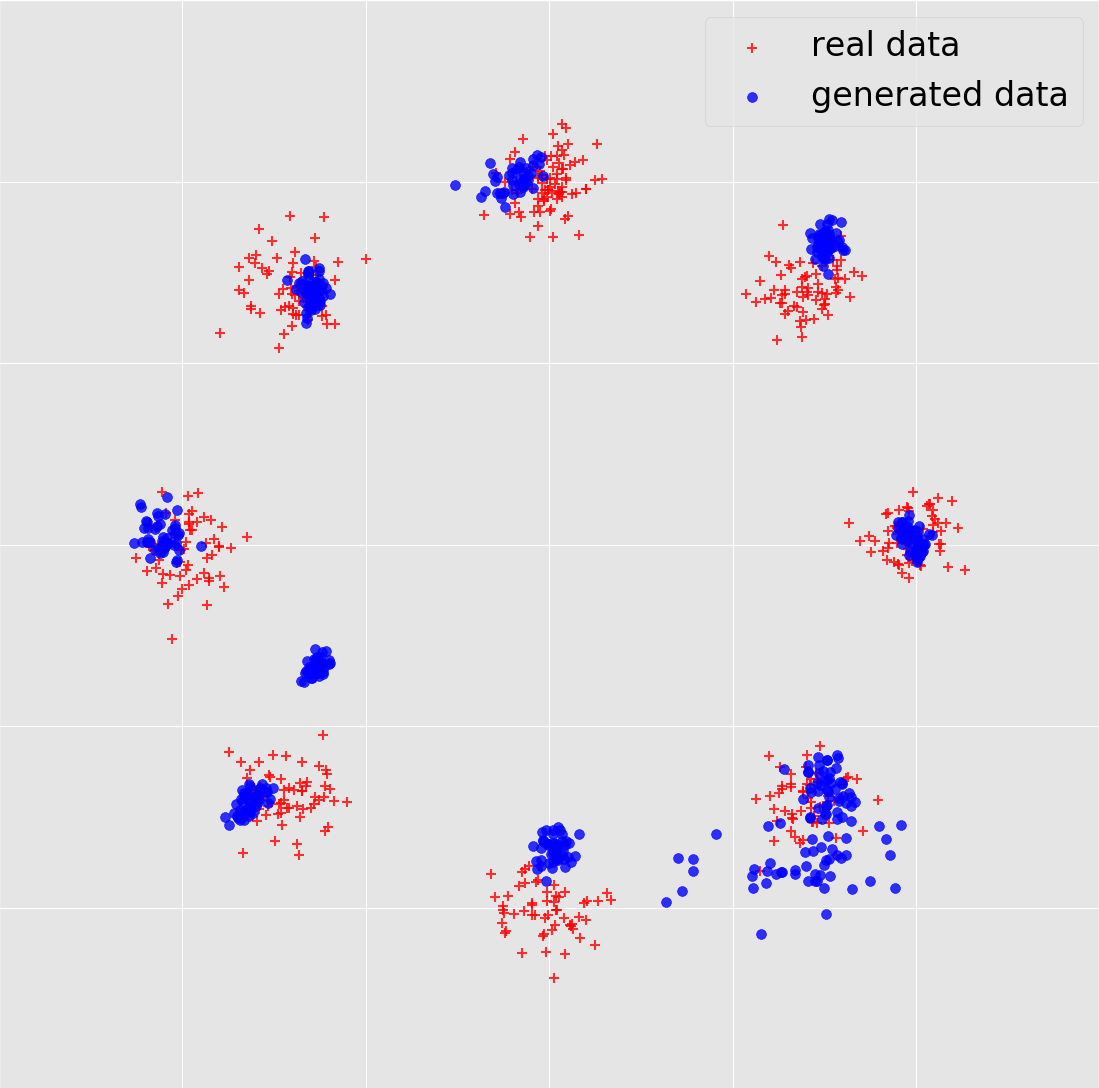} 
\par\end{centering}

\noindent \centering{}}
\par\end{centering}

\noindent \centering{}\caption{Samples generated by $\protect\model$ models trained on synthetic
data with 2, 3, 4 and 10 generators. Generated data are in blue and
data samples from the 8 Gaussians are in red.\label{fig:different_num_gens}}
\end{figure}

\paragraph{The effect of $\beta$ on generated samples.}

To examine the behavior of the diversity coefficient $\beta$, Fig.~\ref{fig:synthetic_beta}
compares samples produced by our $\model$ with 4 generators after
25,000 epochs of training with different values of $\beta$. Without
the JSD force ($\beta=0$), generated samples cluster around one mode.
When $\beta=0.25$, generated data clusters near 4 different modes.
When $\beta=0.75$ or $1.0$, the JSD force is too strong and causes
the generators to collapse, generating 4 increasingly tight clusters.
When $\beta=0.5$, generators successfully cover all of the 8 modes.

\begin{figure}[h]
\noindent \begin{centering}
\subfloat[$\beta=0$]{\noindent \begin{centering}
\includegraphics[width=0.17\textwidth]{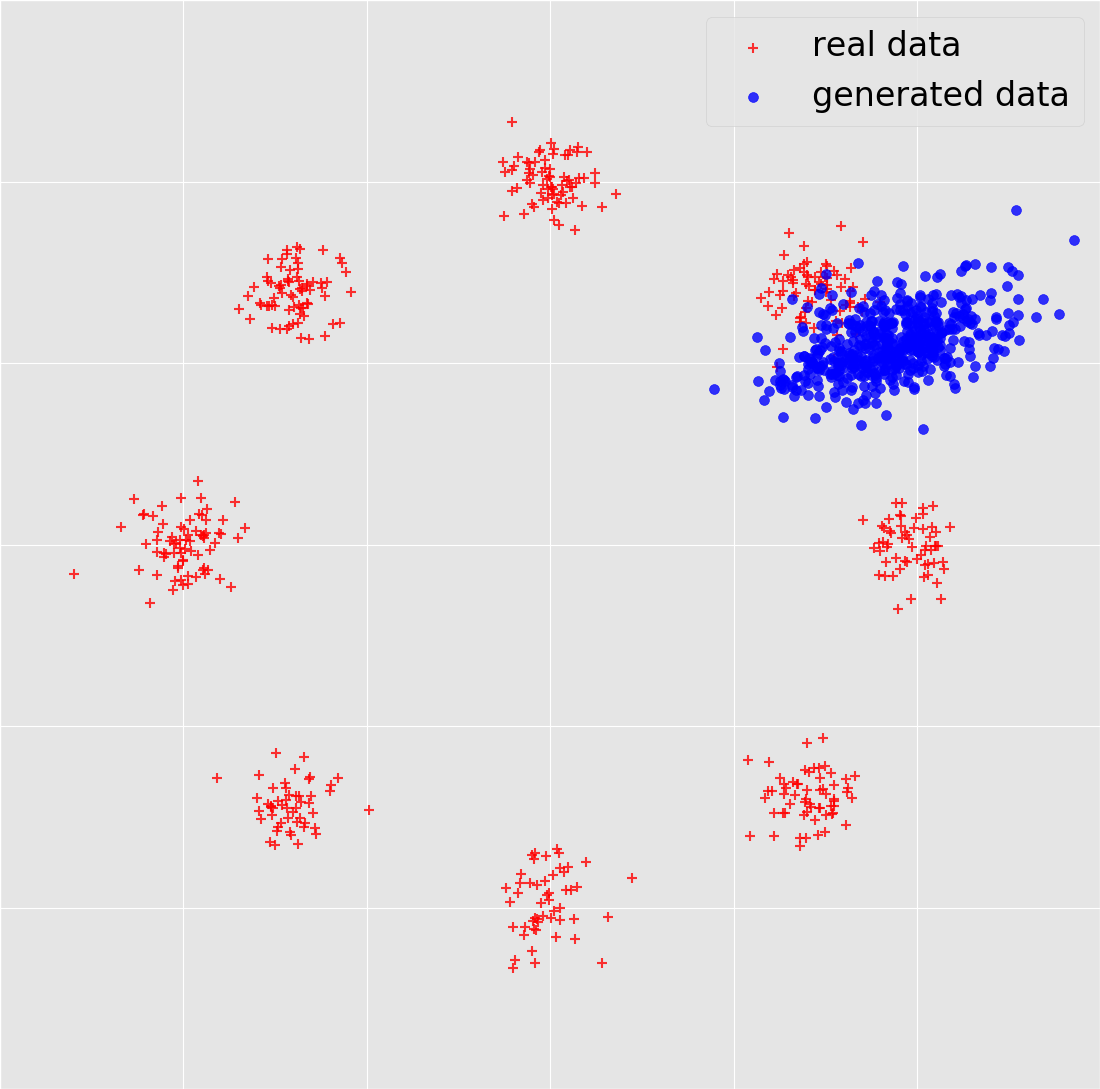} 
\par\end{centering}

\noindent \centering{}}\hfill{} \subfloat[$\beta=0.25$]{\noindent \begin{centering}
\includegraphics[width=0.17\textwidth]{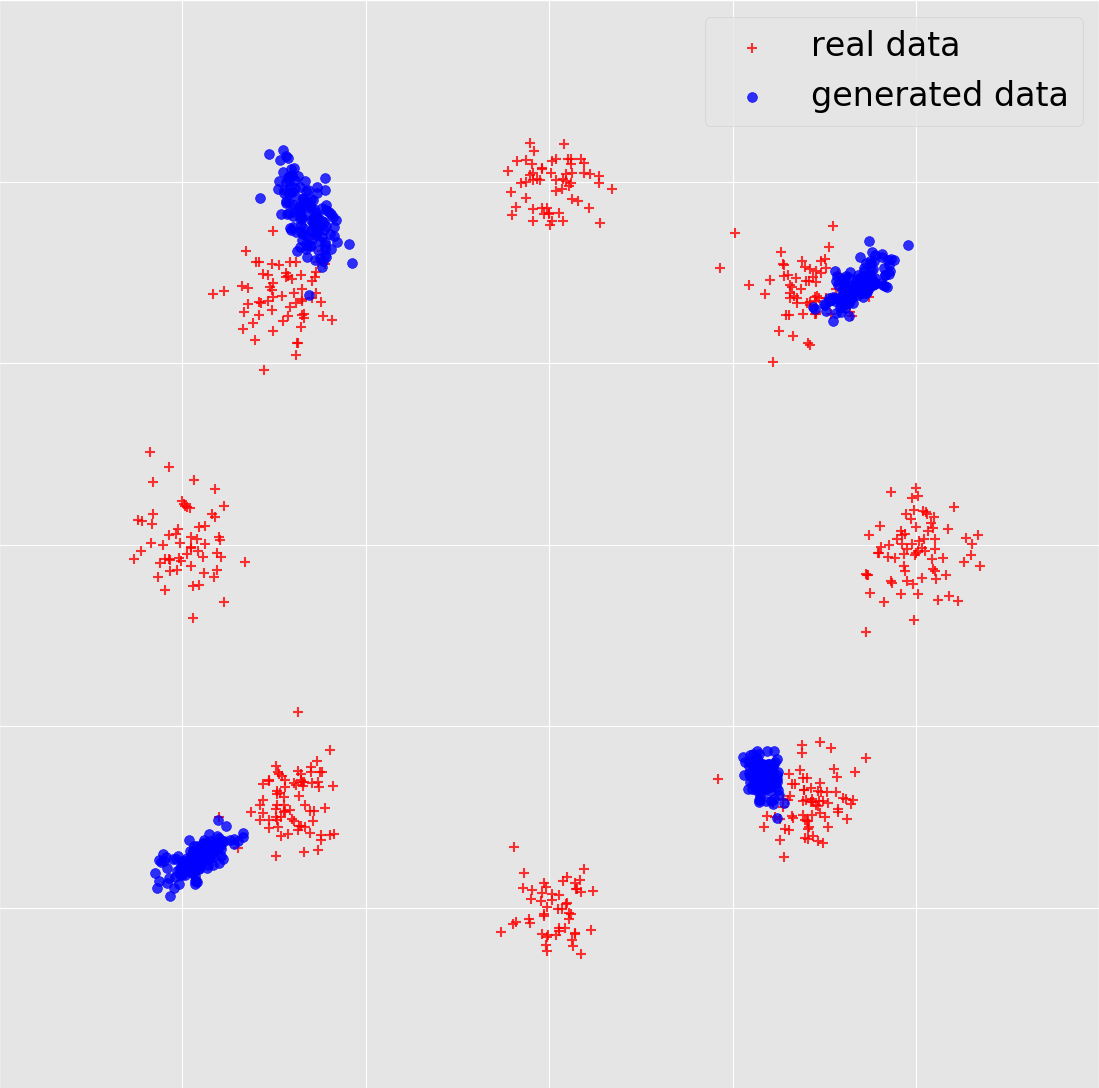}
\par\end{centering}

\noindent \centering{}}\hfill{} \subfloat[$\beta=0.5$]{\noindent \begin{centering}
\includegraphics[width=0.17\textwidth]{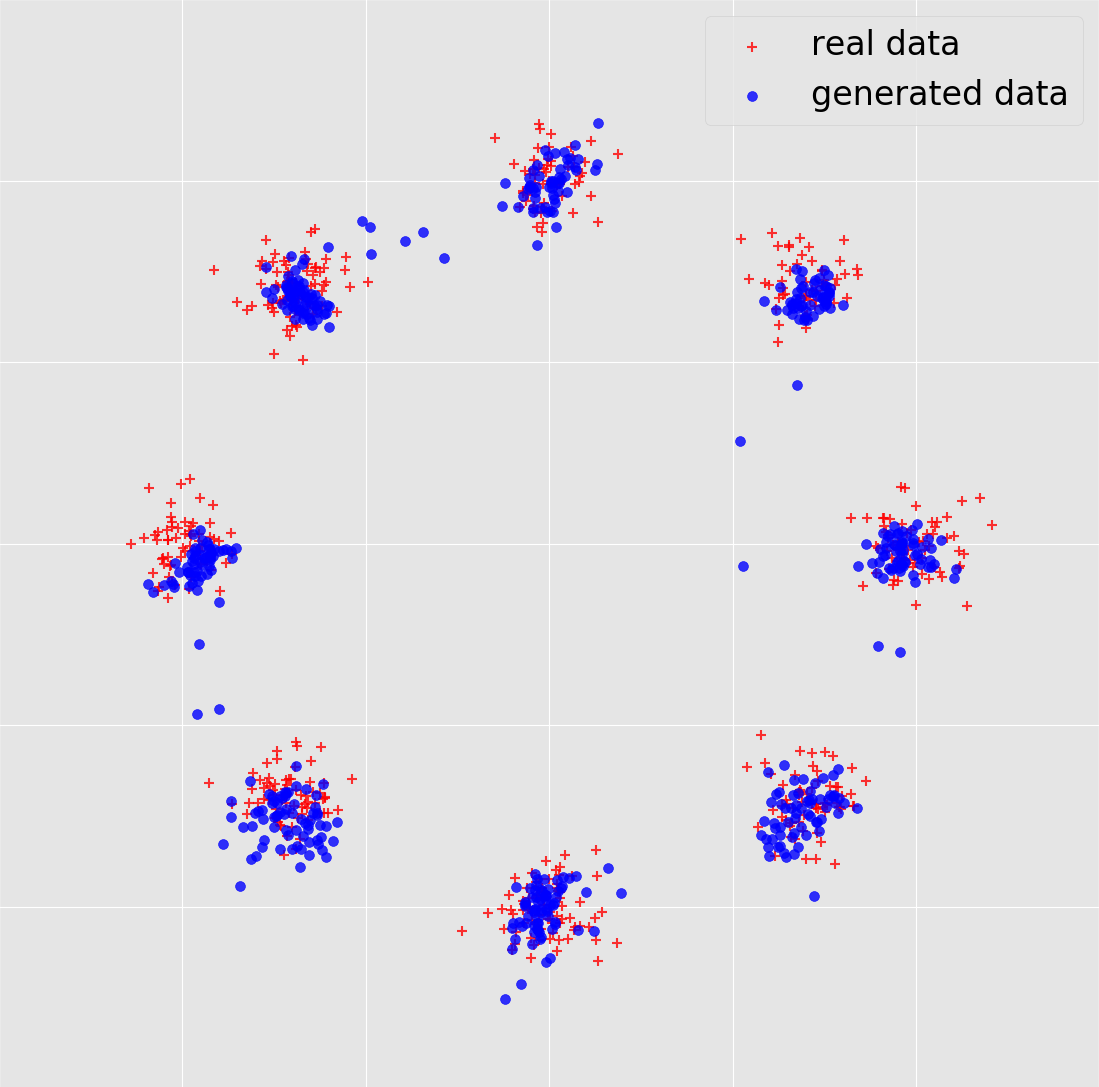}
\par\end{centering}

\noindent \centering{}}\hfill{} \subfloat[$\beta=0.75$]{\noindent \begin{centering}
\includegraphics[width=0.17\textwidth]{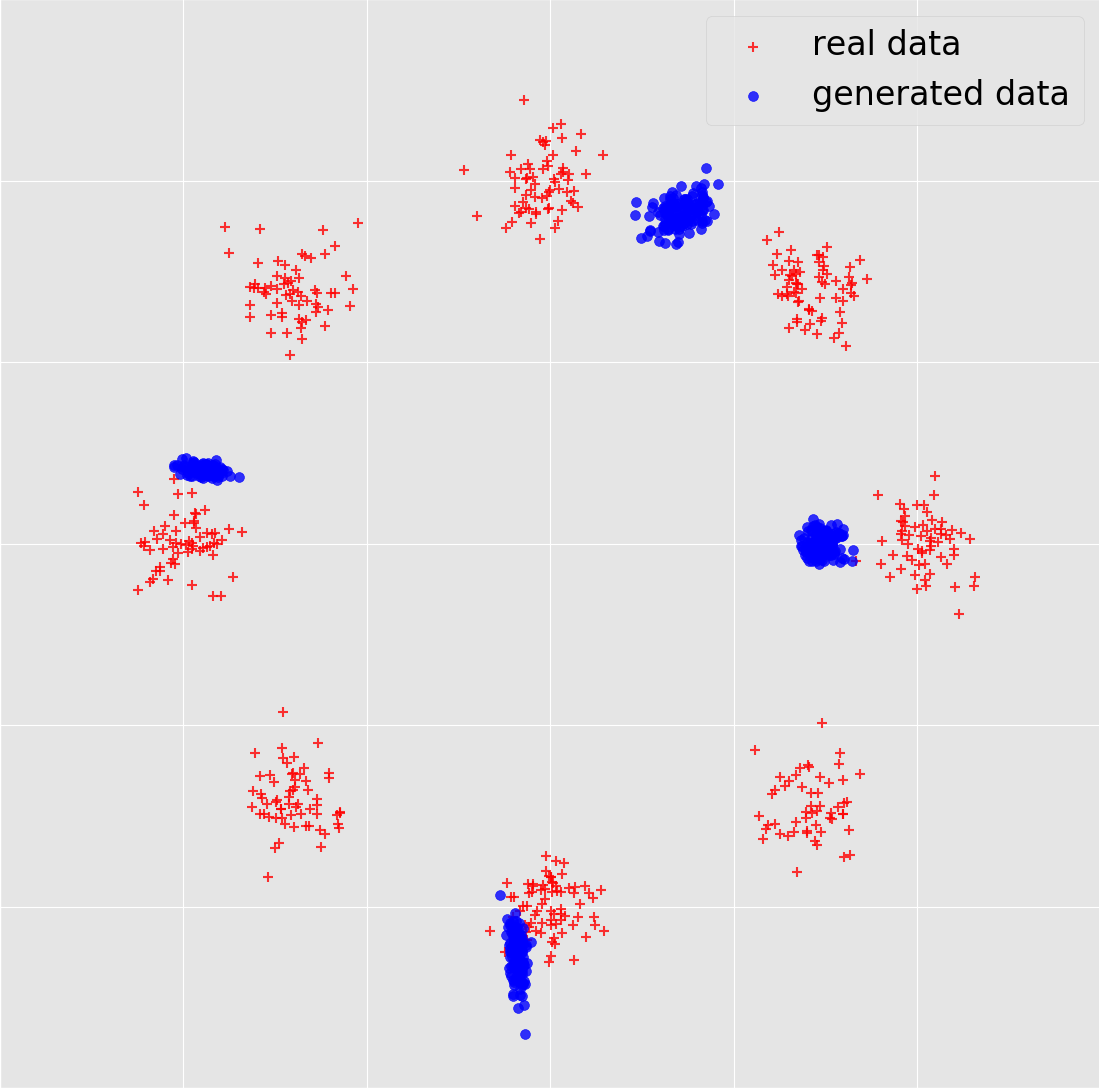}
\par\end{centering}

\noindent \centering{}}\hfill{}\subfloat[$\beta=1.0$]{\noindent \begin{centering}
\includegraphics[width=0.17\textwidth]{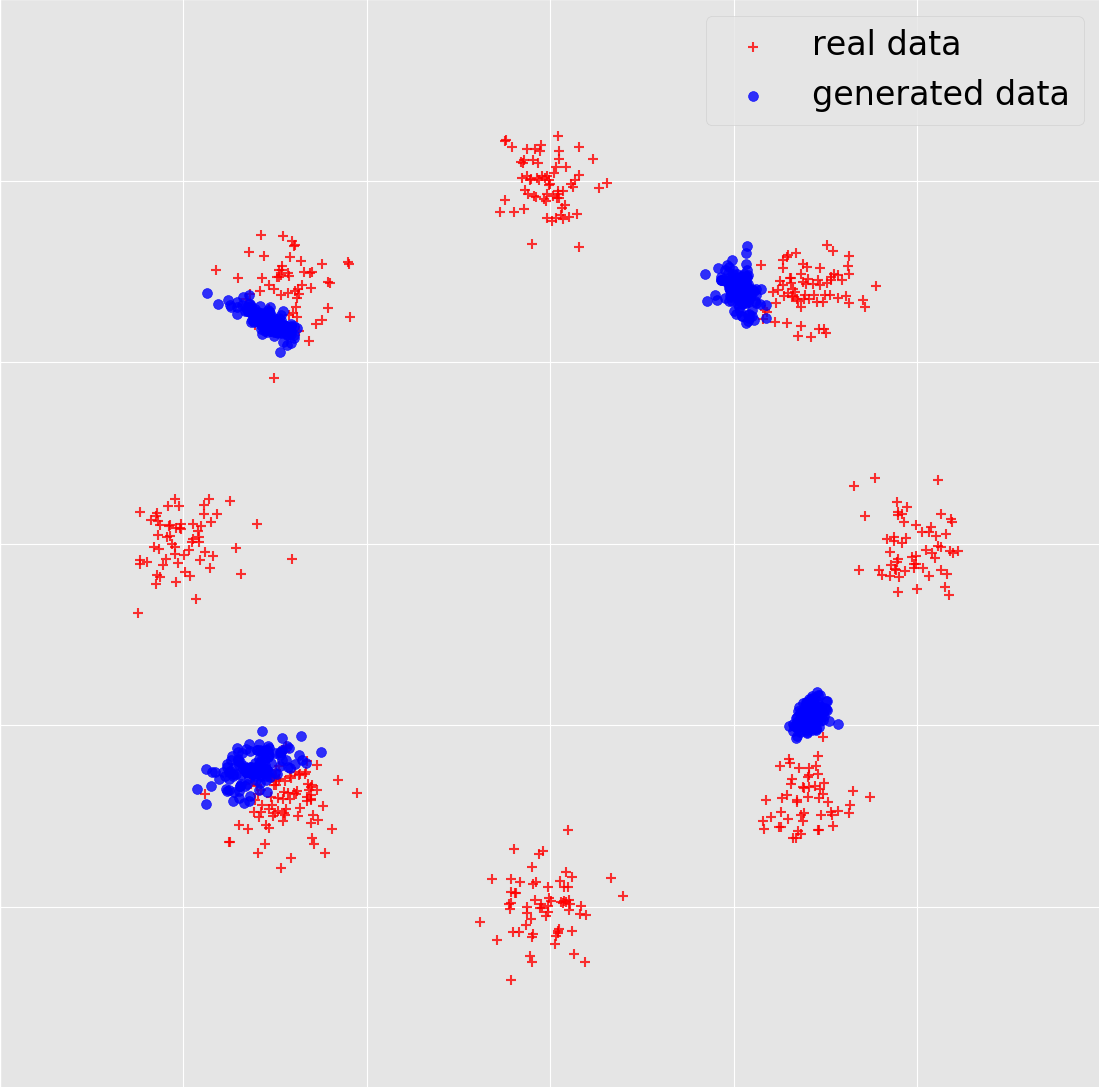}
\par\end{centering}

\noindent \centering{}}
\par\end{centering}

\noindent \centering{}\caption{Samples generated by $\protect\model$ models trained on synthetic
data with different values of diversity coefficient $\beta$. Generated
data are in blue and data samples from the 8 Gaussians are in red.\label{fig:synthetic_beta}}
\end{figure}

\subsection{Real-world datasets\label{sec:Real-world-datasets_appendix}}

\paragraph{Fixing batch normalization center.}

During training we observe that the percentage of active neurons,
which we define as ReLU units with positive activation for at least
10\% of samples in the minibatch, chronically declined. Fig.~\ref{fig:=000025-of-active}
shows the percentage of active neurons in generators trained on CIFAR-10
declined consistently to 55\% in layer 2 and 60\% in layer 3. Therefore,
the quality of generated images, after reaching the peak level, started
declining. One possible cause is that the batch normalization center
(offset) is gradually shifted to the negative range as shown in the
histogram in Fig.~\ref{fig:Histogram-of-batch}. We also observe
the same problem in DCGAN. Our ad-hoc solution for this problem, i.e.,
we fix the offset at zero for all layers in the generator networks.
The rationale is that for each feature map, the ReLU gates will open
for about 50\% highest inputs in a minibatch across all locations
and generators, and close for the rest. Therefore, batch normalization
can keep ReLU units alive even when most of their inputs are otherwise
negative, and introduces a form of competition that encourages generators
to ``specialize'' in different features. This measure significantly
improves performance but does not totally solve the dying ReLUs problem.
We find that late in the training, the input to generators' ReLU units
became more and more right-skewed, causing the ReLU gates to open
less and less often.

\begin{figure}[h]
\noindent \begin{centering}
\subfloat[\% of active neurons in layer 2 and 3.\label{fig:=000025-of-active}]{\includegraphics[width=0.3\textwidth]{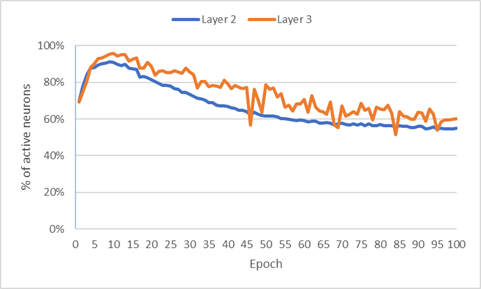}}\qquad{}\subfloat[Histogram of batch normalization centers in layer 2 (left) and 3 (right).\label{fig:Histogram-of-batch}]{\includegraphics[width=0.6\textwidth]{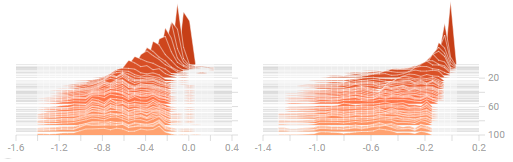}}
\par\end{centering}

\noindent \centering{}\caption{Observation of activate neuron rates and batch normalization centers
in $\protect\model$'s generators trained on CIFAR-10.\label{fig:active_neurons}}
\end{figure}

\paragraph{Experiment settings.}

For the experiments on three large-scale natural scene datasets (CIFAR-10,
STL-10, ImageNet), we closely followed the network architecture and
training procedure of DCGAN. The specifications of our models trained
on CIFAR-10, STL-10 48$\times$48, STL-10 96$\times$96 and ImageNet
datasets are described in Tabs.~(\ref{tab_cifar10_exp}, \ref{tab_stl48_exp},
\ref{tab_stl96_exp}, \ref{tab_imagenet32_exp}), respectively. ``BN''
is short for batch normalization and ``BN center'' is short for
whether to learn batch normalization's center or set it at zero. ``Shared''
is short for parameter sharing among generators or between the classifier
and the discriminator. Feature maps of 10/1 in the last layer for
$C$ and $D$ means that two separate fully connected layers are applied
to the penultimate layer, one for $C$ that outputs $10$ logits and
another for $D$ that outputs $1$ logit. Finally, Figs.~(\ref{fig:cifar-full},
\ref{fig:stl48-full}, \ref{fig:imagenet32-full}, \ref{fig:stl96-good-full},
\ref{fig:stl96-bad-full}) respectively are the enlarged version of
Figs.~(\ref{fig:exp_cifar10_samples}, \ref{fig:exp_stl10_48x48_samples},
\ref{fig:exp_imagenet_samples}, \ref{fig:STL96_good}, \ref{fig:STL96_bad})
in the main manuscript.

\begin{table}[h]
\noindent \begin{centering}
\caption{Network architecture and hyperparameters for the CIFAR-10 dataset.\label{tab_cifar10_exp}}

\par\end{centering}

\noindent \centering{}\resizebox{0.98\textwidth}{!}{
\begin{tabular}{rccrcclc}
\hline 
Operation & Kernel & Strides & Feature maps & BN? & BN center? & Nonlinearity & Shared?\tabularnewline
\hline 
$G\left(\mathbf{z}\right):\mbox{\ensuremath{\mathbf{z}}}\sim\textrm{Uniform}\left[-1,1\right]$ &  &  & 100 &  &  &  & \tabularnewline
Fully connected &  &  & 4$\times$4$\times$512 & $\surd$ & $\times$ & ReLU & $\times$\tabularnewline
Transposed convolution & 5$\times$5 & 2$\times$2 & 256 & $\surd$ & $\times$ & ReLU & $\surd$\tabularnewline
Transposed convolution & 5$\times$5 & 2$\times$2 & 128 & $\surd$ & $\times$ & ReLU & $\surd$\tabularnewline
Transposed convolution & 5$\times$5 & 2$\times$2 & 3 & $\times$ & $\times$ & Tanh & $\surd$\tabularnewline
\hline 
$C\left(\mathbf{x}\right),D\left(\mathbf{x}\right)$ &  &  & 32$\times$32$\times$3 &  &  &  & \tabularnewline
Convolution & 5$\times$5 & 2$\times$2 & 128 & $\surd$ & $\surd$ & Leaky ReLU & $\surd$\tabularnewline
Convolution & 5$\times$5 & 2$\times$2 & 256 & $\surd$ & $\surd$ & Leaky ReLU & $\surd$\tabularnewline
Convolution & 5$\times$5 & 2$\times$2 & 512 & $\surd$ & $\surd$ & Leaky ReLU & $\surd$\tabularnewline
Fully connected &  &  & 10/1 & $\times$ & $\times$ & Softmax/Sigmoid & $\times$\tabularnewline
\hline 
Number of generators & \multicolumn{7}{l}{10}\tabularnewline
Batch size for real data & \multicolumn{7}{l}{64}\tabularnewline
Batch size for each generator & \multicolumn{7}{l}{12}\tabularnewline
Number of iterations & \multicolumn{7}{l}{250}\tabularnewline
Leaky ReLU slope & \multicolumn{7}{l}{0.2}\tabularnewline
Learning rate & \multicolumn{7}{l}{0.0002}\tabularnewline
Regularization constants & \multicolumn{7}{l}{$\beta=0.01$}\tabularnewline
Optimizer & \multicolumn{7}{l}{Adam$\left(\beta_{1}=0.5,\beta_{2}=0.999\right)$}\tabularnewline
Weight, bias initialization & \multicolumn{7}{l}{$\mathcal{N}\left(\mu=0,\sigma=0.01\right)$, $0$}\tabularnewline
\hline 
\end{tabular}}
\end{table}

\begin{table}[h]
\noindent \begin{centering}
\caption{Network architecture and hyperparameters for the STL-10 48$\times$48
dataset.\label{tab_stl48_exp}}

\par\end{centering}

\noindent \centering{}\resizebox{0.98\textwidth}{!}{
\begin{tabular}{rccrcclc}
\hline 
Operation & Kernel & Strides & Feature maps & BN? & BN center? & Nonlinearity & Shared?\tabularnewline
\hline 
$G\left(\mathbf{z}\right):\mbox{\ensuremath{\mathbf{z}}}\sim\textrm{Uniform}\left[-1,1\right]$ &  &  & 100 &  &  &  & \tabularnewline
Fully connected &  &  & 4$\times$4$\times$1024 & $\surd$ & $\times$ & ReLU & $\times$\tabularnewline
Transposed convolution & 5$\times$5 & 2$\times$2 & 512 & $\surd$ & $\times$ & ReLU & $\surd$\tabularnewline
Transposed convolution & 5$\times$5 & 2$\times$2 & 256 & $\surd$ & $\times$ & ReLU & $\surd$\tabularnewline
Transposed convolution & 5$\times$5 & 2$\times$2 & 128 & $\surd$ & $\times$ & ReLU & $\surd$\tabularnewline
Transposed convolution & 5$\times$5 & 2$\times$2 & 3 & $\times$ & $\times$ & Tanh & $\surd$\tabularnewline
\hline 
$C\left(\mathbf{x}\right),D\left(\mathbf{x}\right)$ &  &  & 48$\times$48$\times$3 &  &  &  & \tabularnewline
Convolution & 5$\times$5 & 2$\times$2 & 128 & $\surd$ & $\surd$ & Leaky ReLU & $\surd$\tabularnewline
Convolution & 5$\times$5 & 2$\times$2 & 256 & $\surd$ & $\surd$ & Leaky ReLU & $\surd$\tabularnewline
Convolution & 5$\times$5 & 2$\times$2 & 512 & $\surd$ & $\surd$ & Leaky ReLU & $\surd$\tabularnewline
Convolution & 5$\times$5 & 2$\times$2 & 1024 & $\surd$ & $\surd$ & Leaky ReLU & $\surd$\tabularnewline
Fully connected &  &  & 10/1 & $\times$ & $\times$ & Softmax/Sigmoid & $\times$\tabularnewline
\hline 
Number of generators & \multicolumn{7}{l}{10}\tabularnewline
Batch size for real data & \multicolumn{7}{l}{64}\tabularnewline
Batch size for each generator & \multicolumn{7}{l}{12}\tabularnewline
Number of iterations & \multicolumn{7}{l}{250}\tabularnewline
Leaky ReLU slope & \multicolumn{7}{l}{0.2}\tabularnewline
Learning rate & \multicolumn{7}{l}{0.0002}\tabularnewline
Regularization constants & \multicolumn{7}{l}{$\beta=1.0$}\tabularnewline
Optimizer & \multicolumn{7}{l}{Adam$\left(\beta_{1}=0.5,\beta_{2}=0.999\right)$}\tabularnewline
Weight, bias initialization & \multicolumn{7}{l}{$\mathcal{N}\left(\mu=0,\sigma=0.01\right)$, $0$}\tabularnewline
\hline 
\end{tabular}}
\end{table}

\begin{table}[h]
\noindent \begin{centering}
\caption{Network architecture and hyperparameters for the STL96$\times$96
dataset.\label{tab_stl96_exp}}

\par\end{centering}

\noindent \centering{}\resizebox{0.98\textwidth}{!}{
\begin{tabular}{rccrcclc}
\hline 
Operation & Kernel & Strides & Feature maps & BN? & BN center? & Nonlinearity & Shared?\tabularnewline
\hline 
$G\left(\mathbf{z}\right):\mbox{\ensuremath{\mathbf{z}}}\sim\textrm{Uniform}\left[-1,1\right]$ &  &  & 100 &  &  &  & \tabularnewline
Fully connected &  &  & 4$\times$4$\times$2046 & $\surd$ & $\times$ & ReLU & $\times$\tabularnewline
Transposed convolution & 5$\times$5 & 2$\times$2 & 1024 & $\surd$ & $\times$ & ReLU & $\surd$\tabularnewline
Transposed convolution & 5$\times$5 & 2$\times$2 & 512 & $\surd$ & $\times$ & ReLU & $\surd$\tabularnewline
Transposed convolution & 5$\times$5 & 2$\times$2 & 256 & $\surd$ & $\times$ & ReLU & $\surd$\tabularnewline
Transposed convolution & 5$\times$5 & 2$\times$2 & 128 & $\surd$ & $\times$ & ReLU & $\surd$\tabularnewline
Transposed convolution & 5$\times$5 & 2$\times$2 & 3 & $\times$ & $\times$ & Tanh & $\surd$\tabularnewline
\hline 
$C\left(\mathbf{x}\right),D\left(\mathbf{x}\right)$ &  &  & 32$\times$32$\times$3 &  &  &  & \tabularnewline
Convolution & 5$\times$5 & 2$\times$2 & 128 & $\surd$ & $\surd$ & Leaky ReLU & $\surd$\tabularnewline
Convolution & 5$\times$5 & 2$\times$2 & 256 & $\surd$ & $\surd$ & Leaky ReLU & $\surd$\tabularnewline
Convolution & 5$\times$5 & 2$\times$2 & 512 & $\surd$ & $\surd$ & Leaky ReLU & $\surd$\tabularnewline
Convolution & 5$\times$5 & 2$\times$2 & 1024 & $\surd$ & $\surd$ & Leaky ReLU & $\surd$\tabularnewline
Convolution & 5$\times$5 & 2$\times$2 & 2048 & $\surd$ & $\surd$ & Leaky ReLU & $\surd$\tabularnewline
Fully connected &  &  & 10/1 & $\times$ & $\times$ & Softmax/Sigmoid & $\times$\tabularnewline
\hline 
Number of generators & \multicolumn{7}{l}{10}\tabularnewline
Batch size for real data & \multicolumn{7}{l}{64}\tabularnewline
Batch size for each generator & \multicolumn{7}{l}{12}\tabularnewline
Number of iterations & \multicolumn{7}{l}{250}\tabularnewline
Leaky ReLU slope & \multicolumn{7}{l}{0.2}\tabularnewline
Learning rate & \multicolumn{7}{l}{0.0002}\tabularnewline
Regularization constants & \multicolumn{7}{l}{$\beta=1.0$}\tabularnewline
Optimizer & \multicolumn{7}{l}{Adam$\left(\beta_{1}=0.5,\beta_{2}=0.999\right)$}\tabularnewline
Weight, bias initialization & \multicolumn{7}{l}{$\mathcal{N}\left(\mu=0,\sigma=0.01\right)$, $0$}\tabularnewline
\hline 
\end{tabular}}
\end{table}

\begin{table}[h]
\noindent \begin{centering}
\caption{Network architecture and hyperparameters for the ImageNet dataset.\label{tab_imagenet32_exp}}

\par\end{centering}

\noindent \centering{}\resizebox{0.98\textwidth}{!}{
\begin{tabular}{rccrcclc}
\hline 
Operation & Kernel & Strides & Feature maps & BN? & BN center? & Nonlinearity & Shared?\tabularnewline
\hline 
$G\left(\mathbf{z}\right):\mbox{\ensuremath{\mathbf{z}}}\sim\textrm{Uniform}\left[-1,1\right]$ &  &  & 100 &  &  &  & \tabularnewline
Fully connected &  &  & 4$\times$4$\times$512 & $\surd$ & $\times$ & ReLU & $\times$\tabularnewline
Transposed convolution & 5$\times$5 & 2$\times$2 & 256 & $\surd$ & $\times$ & ReLU & $\surd$\tabularnewline
Transposed convolution & 5$\times$5 & 2$\times$2 & 128 & $\surd$ & $\times$ & ReLU & $\surd$\tabularnewline
Transposed convolution & 5$\times$5 & 2$\times$2 & 3 & $\times$ & $\times$ & Tanh & $\surd$\tabularnewline
\hline 
$C\left(\mathbf{x}\right),D\left(\mathbf{x}\right)$ &  &  & 32$\times$32$\times$3 &  &  &  & \tabularnewline
Convolution & 5$\times$5 & 2$\times$2 & 128 & $\surd$ & $\surd$ & Leaky ReLU & $\surd$\tabularnewline
Convolution & 5$\times$5 & 2$\times$2 & 256 & $\surd$ & $\surd$ & Leaky ReLU & $\surd$\tabularnewline
Convolution & 5$\times$5 & 2$\times$2 & 512 & $\surd$ & $\surd$ & Leaky ReLU & $\surd$\tabularnewline
Fully connected &  &  & 10/1 & $\times$ & $\times$ & Softmax/Sigmoid & $\times$\tabularnewline
\hline 
Number of generators & \multicolumn{7}{l}{10}\tabularnewline
Batch size for real data & \multicolumn{7}{l}{64}\tabularnewline
Batch size for each generator & \multicolumn{7}{l}{12}\tabularnewline
Number of iterations & \multicolumn{7}{l}{50}\tabularnewline
Leaky ReLU slope & \multicolumn{7}{l}{0.2}\tabularnewline
Learning rate & \multicolumn{7}{l}{0.0002}\tabularnewline
Regularization constants & \multicolumn{7}{l}{$\beta=0.1$}\tabularnewline
Optimizer & \multicolumn{7}{l}{Adam$\left(\beta_{1}=0.5,\beta_{2}=0.999\right)$}\tabularnewline
Weight, bias initialization & \multicolumn{7}{l}{$\mathcal{N}\left(\mu=0,\sigma=0.01\right)$, $0$}\tabularnewline
\hline 
\end{tabular}}
\end{table}

\begin{figure}[h]
\noindent \begin{centering}
\includegraphics[width=0.9\textwidth]{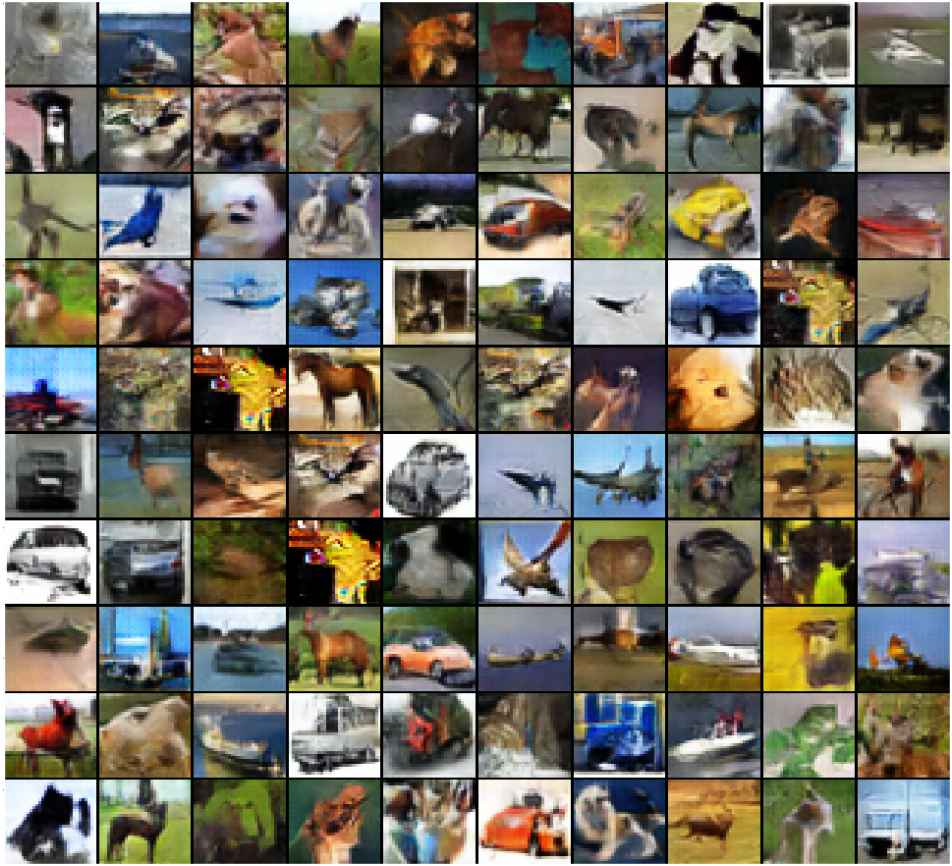}
\par\end{centering}

\noindent \centering{}\caption{Images generated by $\protect\model$ trained on the CIFAR-10 dataset.\label{fig:cifar-full}}
\end{figure}

\begin{figure}[h]
\noindent \begin{centering}
\includegraphics[width=0.9\textwidth]{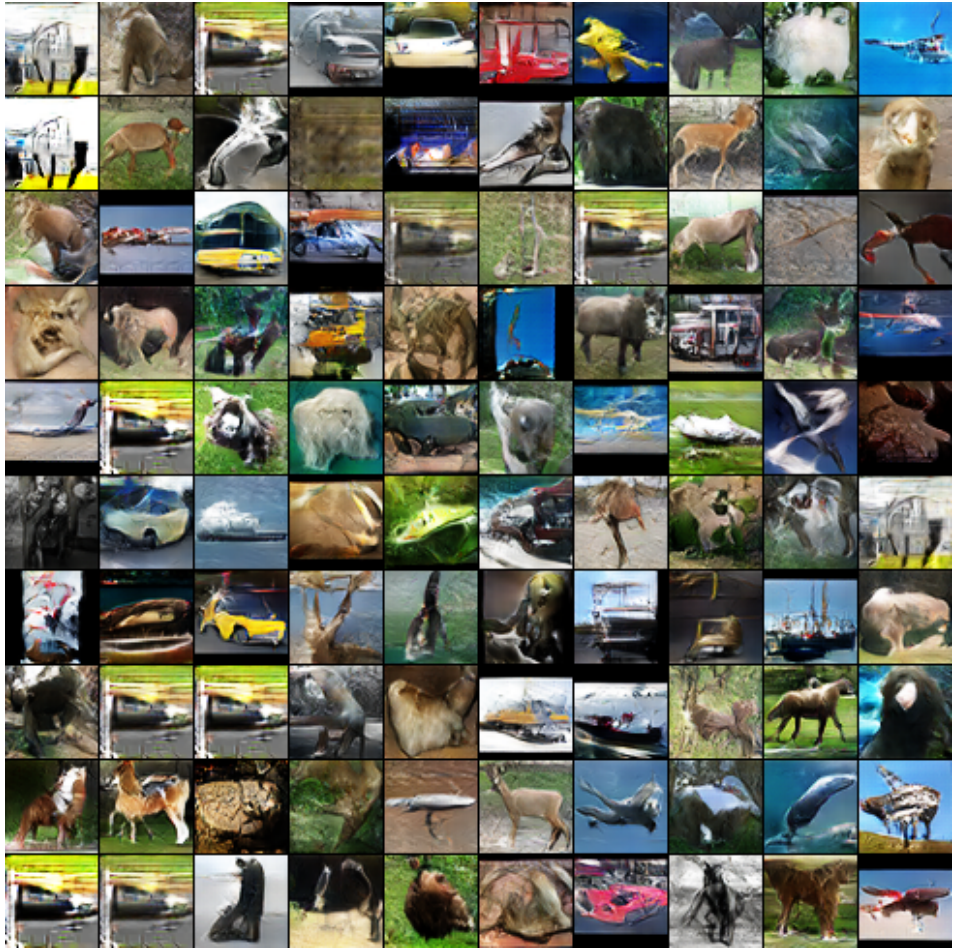}
\par\end{centering}

\noindent \centering{}\caption{Images generated by $\protect\model$ trained on the rescaled 48$\times$48
STL-10 dataset.\label{fig:stl48-full}}
\end{figure}

\begin{figure}[h]
\noindent \begin{centering}
\includegraphics[width=0.9\textwidth]{images/imagenet32_samples_0008}
\par\end{centering}

\noindent \centering{}\caption{Images generated by $\protect\model$ trained on the rescaled 32$\times$32
ImageNet dataset.\label{fig:imagenet32-full}}
\end{figure}

\begin{figure}[h]
\noindent \begin{centering}
\includegraphics[width=0.9\textwidth]{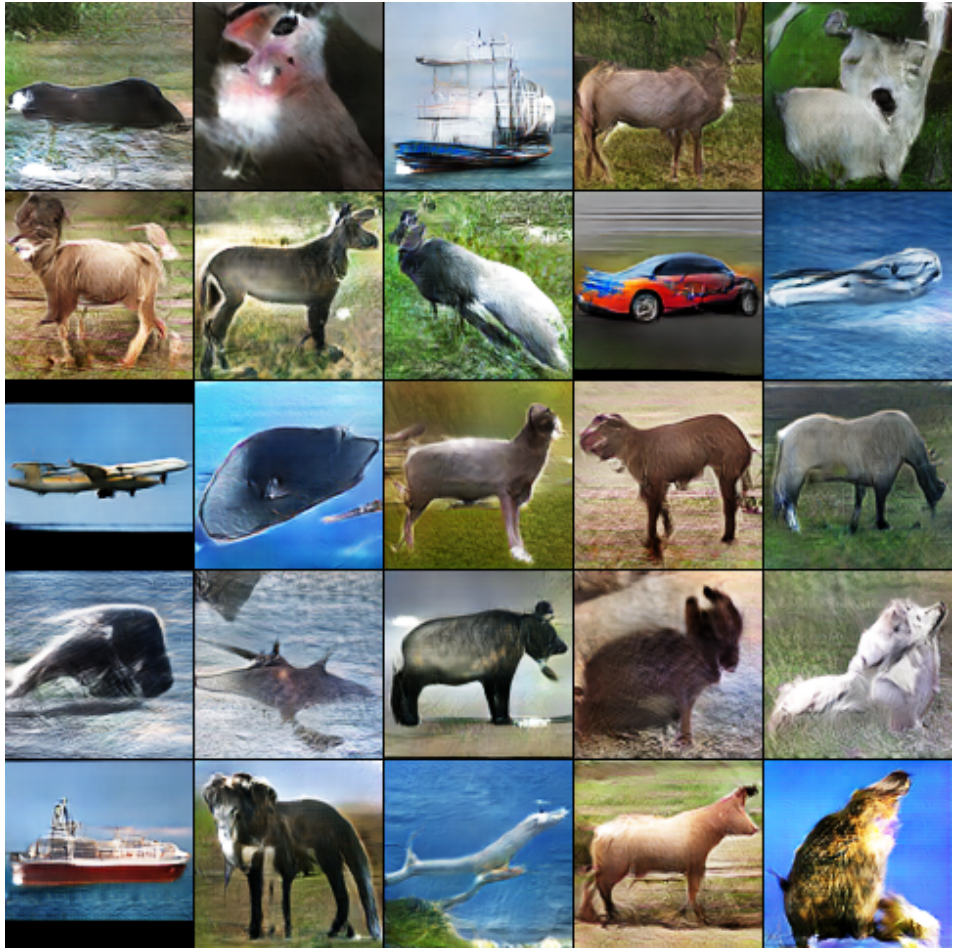}
\par\end{centering}

\noindent \centering{}\caption{Cherry-picked samples generated by $\protect\model$ trained on the
96$\times$96 STL-10 dataset.\label{fig:stl96-good-full}}
\end{figure}

\begin{figure}[h]
\noindent \begin{centering}
\includegraphics[width=0.9\textwidth]{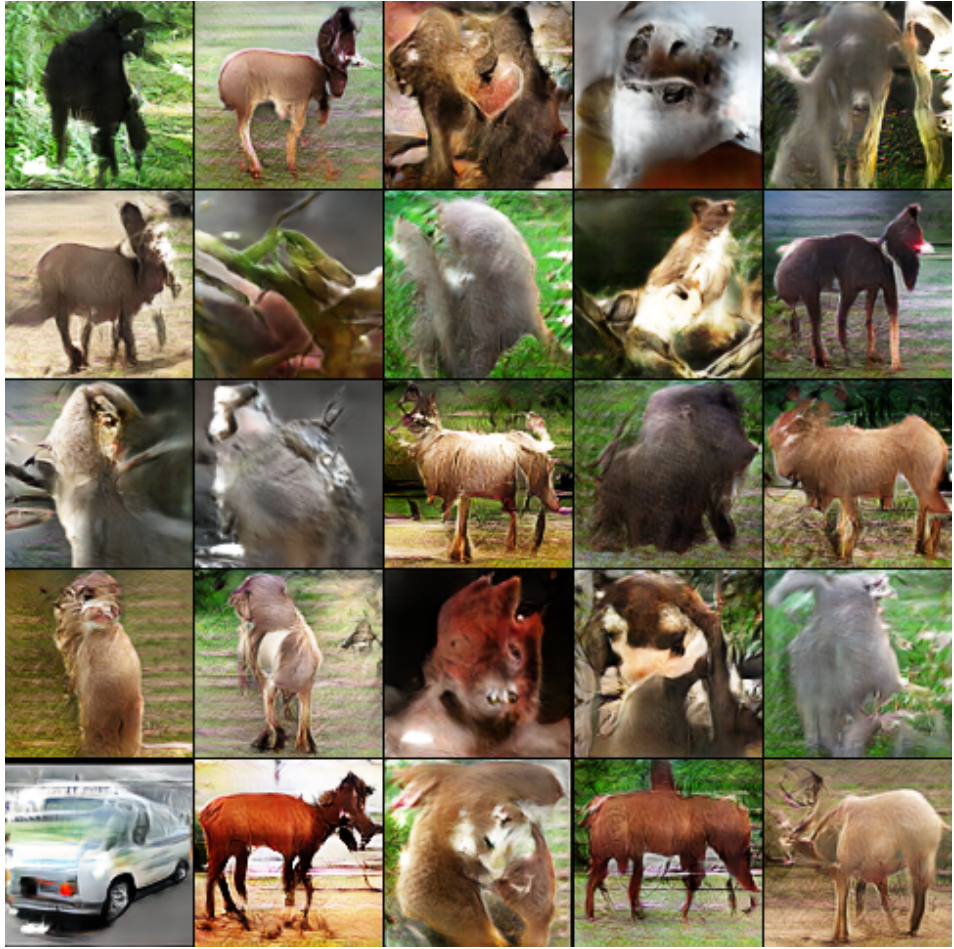}
\par\end{centering}

\noindent \centering{}\caption{Incomplete, unrealistic samples generated by $\protect\model$ trained
on the 96$\times$96 STL-10 dataset.\label{fig:stl96-bad-full}}
\end{figure}

\end{document}